\newcommand\independent{\protect\mathpalette{\protect\independent}{\perp}}
\def\independent#1#2{\mathrel{\rlap{$#1#2$}\mkern2mu{#1#2}}}
\newcommand{\pr}{\mathbb{P}}
\newcommand{\E}{\mathbb{E}}
\DeclareMathOperator{\CP}{CP}
\newcommand{\mR}{\mathbb{R}}
\newcommand{\mF}{\mathbb{F}}
\DeclareMathOperator{\ReLU}{ReLU}
\DeclareMathOperator{\INAL}{INAL}
\DeclareMathOperator{\sign}{sgn}
\DeclareMathOperator{\orb}{orb}
\DeclareMathOperator{\NS}{NS}
\DeclareMathOperator{\Stab}{Stab}
\DeclareMathOperator{\NN}{NN}
\newcommand{\cF}{\mathcal{F}}
\newcommand{\cX}{\mathcal{X}}
\newcommand{\cU}{\mathcal{U}}
\newcommand{\cN}{\mathcal{N}}
\newcommand{\bR}{\mathbb{R}}
\newcommand{\bN}{\mathbb{N}}
\newcommand{\bF}{\mathbb{F}}
\newcommand{\cY}{\mathcal{Y}}
\theoremstyle{plain}
\newtheorem{defin}{Definition}%[section]
\theoremstyle{plain}
\newtheorem{thm}{Theorem}%[section]
\theoremstyle{plain}
\theoremstyle{plain}
\theoremstyle{plain}
\theoremstyle{plain}
\newtheorem{prop}{Proposition}%[section]
\theoremstyle{plain}
\theoremstyle{plain}
\newtheorem{lemma}{Lemma}%[section]
\theoremstyle{plain}
\theoremstyle{plain}
\theoremstyle{remark}
\newtheorem{remark}{Remark}%[section]
\theoremstyle{remark}
\theoremstyle{plain}
\theoremstyle{plain}
\newtheorem{assumption}{Assumption}%[section]
\definecolor{DSgray}{cmyk}{0,0,0,0.7}
\definecolor{DSred}{cmyk}{0,0.7,0,0.7}
\definecolor{DSblue}{cmyk}{0.7,0,0,0}
\title{
%Generalization Error and Distribution Shift for the Reasoning Component in Pointer Value Retrieval
Learning to Reason with Neural Networks: Generalization, Unseen Data and Boolean Measures}
\author{%
  Emmanuel Abbe\thanks{Authors are in alphabetical order.} \\
  EPFL
%   \texttt{emmanuel.abbe@epfl.ch} \\
  % examples of more authors
  \And
  Samy Bengio \\
  Apple
  % Address \\
  % \texttt{email} \\
  \And
  Elisabetta Cornacchia \\
  EPFL 
  % Address \\
  % \texttt{email} \\
  \And
  Jon Kleinberg \\
  Cornell University 
  % Address \\
  % \texttt{email} \\
  \And
  Aryo Lotfi \\
  EPFL 
  \And
  Maithra Raghu \\
  Google Research 
  \And
  Chiyuan Zhang \\
  Google Research 
  % Address \\
  % \texttt{email} \\
}
\begin{document}
\maketitle

\begin{abstract}
    This paper considers the Pointer Value Retrieval (PVR) benchmark introduced in~\cite{Zhang2021PointerVR}, where a `reasoning' function acts on a string of digits to produce the label. More generally, the paper considers the learning of logical functions with gradient descent (GD) on neural networks. It is first shown that in order to learn logical functions with gradient descent on symmetric neural networks, the generalization error can be lower-bounded in terms of the {\it noise-stability} of the target function, supporting a conjecture made in~\cite{Zhang2021PointerVR}. It is then shown that in the distribution shift setting, when the data withholding corresponds to freezing a single feature (referred to as canonical holdout), the generalization error of gradient descent admits a tight characterization in terms of the {\it Boolean influence} for several relevant architectures. This is shown on linear models and supported experimentally on other models such as MLPs and Transformers. In particular, this puts forward the hypothesis that for such architectures and for learning logical functions such as PVR functions, GD tends to have an {\it implicit bias towards low-degree representations}, which in turn gives the Boolean influence for the generalization error under quadratic loss.
%Altogether, the paper connects the generalization error of logical/PVR function learning to two central quantities in Fourier Boolean analysis, the noise stability and the Boolean influence. This opens up to new conjectures and extensions. 
\end{abstract}

\section{Introduction}
Recently~\cite{Zhang2021PointerVR} introduced the pointer value retrieval (PVR) benchmark. This benchmark consists of a supervised learning task on MNIST~\cite{lecun2010mnist} digits with a `logical' or `reasoning' component in the label generation. More specifically, the functions to be learned are defined on MNIST digits organized either sequentially or on a grid, and the label is generated by applying some `reasoning' on these digits, with a specific digit acting as a pointer on a subset of other digits from which a logical/Boolean function is computed to generate the label.

For instance, consider the PVR setting for binary digits in the string format, where a string of MNIST digits is used as input. Consider in particular the case where only 0 and 1 digits are used, such as in the example of Figure~\ref{fig:pvr-example}. The label of this string is defined as follows: the first 3 bits $101$ define the pointer in binary expansion, and the pointer points to a window of a given length, say 2 in this example. Specifically, the pointer points at the first bit of the window. To generate the label, one has to thus look the 6th window\footnote{pointer $000$ points at the first window, pointer $001$ at the second window, and so on. Thus pointer $101$, that is equal to $5$ in binary expansion, points at the 6th window.} of length 2 given by $11$, and there the label is produced by applying some fixed function, such as the parity (so the label would be 0 in this example). In \cite{Zhang2021PointerVR}, the PVR benchmark is also defined for matrices of digits rather than strings; we focus here on the string version that captures all the purposes of the PVR study.
% \snote{I would suggest adding a figure here with a visual representation of the task you just describe, so people understand it clearly}
\begin{figure}[t]
% \vspace{-1em}
\centering
\includegraphics[width=0.8\textwidth]{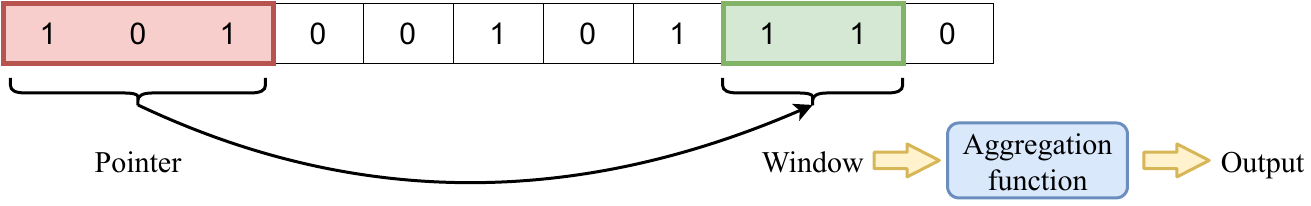}
\caption{An example of a PVR function with a window size of 2. The first $3$ bits are the pointer, which points to a window in the subsequent bits. Specifically, the number indicated by the pointer bits in binary expansion gives the position of the first bit of the window. The label is then produced by applying some fixed aggregation function to the window bits (e.g., parity, majority-vote, etc.).}
\label{fig:pvr-example}
\vspace{-2em}
\end{figure}
This benchmark is introduced to understand the limits of deep learning on tasks that go beyond classical image recognition, investigating in particular the trade-off between memorization and reasoning by acting with a particular distribution shift at testing (see further details below).  

In order to learn such PVR functions, one has to first learn the digit identification and then the logical component on these digits. Handling both tasks successfully at once is naturally more demanding than succeeding at the latter assuming the first is successful. Here, we focus on the `logical component' as a necessary component to learn, and this corresponds to learning a Boolean function. The overall function that maps the pixels of an image to its label in the PVR is of course also a Boolean function (like any computer-encoded function), but the structural properties of such meta-functions are more challenging to describe, and left for future work. In any case, to understand the limits of deep learning on such benchmarks, we focus on investigating first the limits of deep learning on the logical/Boolean component. 

We next re-state formally one of the PVR benchmarks from~\cite{Zhang2021PointerVR}, focusing on binary digits for simplicity. We will use the alphabet $\{0,1\}$ to describe the benchmark to connect to the MNIST dataset, but will later switch to the alphabet $\{+1,-1\}$ for the problem of learning Boolean functions with neural networks. Recall for $d \in \bN$, $\bF_2^d = \{0,1\}^d$.

% \begin{defin}[Boolean PVR]
% The input $u$ consists of an image containing a sequence of $n$ MNIST digits restricted to 0 or 1, with each digit equiprobably sampled, and the label is generated as:
% \begin{align}
% F(u_1,\dots,u_{n})= g(z(P(z^p)), \ldots, z(P(z^p)+w-1)),
% \end{align}
% where $w \in [n-p]$ is the window size, $g: \mF_2^w \to \mR$ is the aggregation function, $P: \mF_2^p \to [k+1:n-w]$ is the pointer and $z(j)$ is the $\mF_2$-label of digit $j$.  

% In words: the first $p$ digits of the image give a pointer to the beginning of a window of $w$ consecutive digits, and the label is produced by applying a Boolean function $g$ on the binary labels of these $w$ digits. 

% We also define the symbolic Boolean PVR for the same parameters with 
% \begin{align}
% f(z_1,\dots,z_{n})= g(z(P(z^p)), \ldots, z(P(z^p)+w-1)).
% \end{align} 
% I.e., $f$ corresponds to the function $F$ acting directly on the binary digits rather than the MNIST images representing the binary digits.  
% \end{defin}

\begin{defin}[Boolean PVR with \textit{sliding} windows] \label{def:PVR}
The input $x$ consists of $n$ bits and the label is generated as
\begin{align}
f(x_1,\dots,x_{n})= g(x_{P(x^p)}, \ldots, x_{P(x^p)+w-1}).
\end{align}
%\begin{align}
%F(u_1,\dots,u_{n})= g(z(P(z^p)), \ldots, z(P(z^p)+w-1)),
%\end{align}
where $p$ is the number of bits in the pointer, $w$ is the window size, $g: \mF_2^w \to \mR$ is the aggregation function, $P: \mF_2^p \to [p+1:n]$ is the pointer map, $x_j$ denotes the bit in position $j$ and $x^p =(x_1,...,x_p)$ denotes the pointer bits. We often set $n = p + 2^p$, and hence the last window starts at the last digit.

In words, the first $p$ bits give a pointer to the beginning of a window of $w$ consecutive bits, and the label is produced by applying a Boolean function $g$ on these $w$ bits. 

%We also define the symbolic Boolean PVR for the same parameters with 
%\begin{align}
%f(z_1,\dots,z_{n})= g(z_{P(z^p)}, \ldots, z_{P(z^p)+w-1}).
%\end{align} 
%I.e., $f$ corresponds to the function $F$ acting directly on the binary digits rather than the MNIST images representing the binary digits.
\end{defin}
% \begin{remark}
% Note that in \cite{Zhang2021PointerVR}, the PVR benchmark is also defined on MNIST digit images, in addition to symbolic bits or digits. As discussed in the introduction, learning in the image-based setting requires identifying the digit images on top of learning the logical functions, and here we focus on the latter part.
% \enote{maybe we can compress this remark with lines 37-44?}
% %\snote{What do you mean by "were we focus"?}
% \end{remark}

\begin{remark}
If $w > 1$, for some values of the pointer, $P(x^p) + w - 1$ would exceed the dimension of input $n$. 
% \enote{for how it is defined it seems that this happens for all values of the pointer} 
This issue can be solved by using cyclic indices or by using non-overlapping  windows as defined in Appendix \ref{sec:NSforPVR}. However, in the experiments of this paper, we mainly truncate windows (if necessary) in order to capture the underlying asymmetries (e.g., the last windows have smaller sizes).
% since it results in asymmetries that make our experiments more interesting. 
%We define the variation of Boolean PVR task with non-overlapping windows as follows:
%\begin{itemize}
 %   \item PVR with \emph{non-overlapping} windows: the $2^p$ windows pointed by the pointer bits are non-overlapping, i.e., the first window is formed by bits $x_{p+1},..., x_{p+w}$, the second window is formed by bits $x_{p+w+1},...,x_{p+2w}$, and so forth.
    % \item PVR with \emph{sliding} windows: the windows can overlap. For instance, the first window is formed by bits $z_{p+1},..., z_{p+w}$, the second window is formed by bits $z_{p+2},...,z_{p+w+1}$, and so on (as considered in~\cite{Zhang2021PointerVR}. \enote{I'm not sure whether this is a `variation' of the PVR task, it seems to be the same as in the definition? Maybe it was meant to be the one with cyclic windows?}
%\end{itemize}
\end{remark}
%Learning in the PVR framework requires learning the role of the pointer and then learning the Boolean function $g$. Thus learning $g$ is again a necessary requirement. Further, as discussed above, learning $F$ is more demanding than learning $f$ on the binary representation, and $f$ is again a Boolean function.  
In this paper we consider the problem of learning in the holdout setting, i.e., when some data are withheld (or `unseen') during training. We focus on a specific type of holdout setting, that we call `canonical holdout', which we define here. Extensions to other types of holdout are also of interest, and left for future work.

%Further, we can also consider the distribution shift problem on the Boolean function alone, excluding the pointer component. So consider learning Boolean functions under distribution shift and let us start with domain augmentation. 

\begin{defin}[Canonical holdout] \label{def:canonical_distr_shift}
Let $\mathcal{F}$ be the class of Boolean functions on $n$ bits and let $f$ be a specific function in that class. For $k \in [n]$, consider the problem of learning $\mathcal{F}$ from samples $(X_{-k},f(X_{-k}))$,
% \enote{suggestion of notation: $(X_{- k}, f(X_{- k}))$},
where the $X_{-k}$'s are independently drawn from the distribution that freezes component $k$ to $1$ and that draws the other components i.i.d.\ Bernoulli$(1/2)$. Let $\tilde{f}_{-k}$ be the function learned under this training distribution. We are interested in the generalization error with square loss when the $k$-th bit is not frozen at testing, i.e.,
\begin{align}
    \mathrm{gen}(\mathcal{F},\tilde{f}_{-k}) = \frac{1}{2}\mathbb{E}_{X \sim_U {\mathbb{F}_2^n}, f \sim_U \mathcal{F}}(f(X) - \tilde{f}_{-k}(X))^2,
\end{align}
where by $X \sim_U {\mathbb{F}_2^n} $ we mean that $X$ is chosen uniformly at random from ${\mathbb{F}_2^n} $, and similarly for $f \sim_U \mathcal{F} $.
% We use the following notation i
We denote by $\tilde f$ the function learned in the case where the $k$-th component is not frozen at training, i.e., when the train and test distributions are both uniform on the Boolean hypercube, and use the following notation for the corresponding generalization error:
\begin{align}
    \mathrm{gen}(\mathcal{F},\tilde{f}) = \frac{1}{2}\mathbb{E}_{X \sim_U \mathbb{F}_2^n, f \sim_U \mathcal{F}} (f(X) - \tilde{f}(X))^2.
\end{align}
\end{defin}
The `canonical' holdout is thus a special case of holdout where a single feature/bit is frozen at training. In \cite{Zhang2021PointerVR}, different types of holdout are allowed where a given string is absent from the windows rather than a given bit as in the canonical version. We believe that the canonical holdout still captures the essence of the PVR benchmark: other windows will get to see the strings that are withheld due to the bit freezing, and thus the goal is still to investigate whether neural networks trained by GD will manage to patch together the different pieces. 

Finally, we will often consider neural network architectures that have invariances on the input, such as permutation invariance. In the PVR setting, this means that we do not assume that the learner has knowledge of which bits are in the window. In such cases, instead of learning a class of function $\mathcal{F}$, one can equivalently talk about learning a single function $f$ (with the implicit $\mathcal{F}$ defined as the orbit of $f$ through the permutation group), and define
\begin{align*}
    \mathrm{gen}(f,\tilde{f}_{-k}) = \frac{1}{2}\mathbb{E}_{X \sim_U \mathbb{F}_2^n}(f(X) - \tilde{f}_{-k}(X))^2 \quad \text{ and } \quad \mathrm{gen}(f,\tilde{f}) = \frac{1}{2}\mathbb{E}_{X \sim_U \mathbb{F}_2^n} (f(X) - \tilde{f}(X))^2.
\end{align*}

\subsection{Contributions of this paper}
%This paper considers several neural network models trained by gradient descent, 
%Particularly, we focus on transformer \cite{vaswani2017attention-transformer}, multi-layer perceptron (MLP) and MLP-mixer \cite{tolstikhin2021mlpmixer} architectures for our experiments. 
%We also use linear neural networks of varying depths and initialization scales in Section \ref{sec:linear-models-experiments} to back up our claims from the experiments and linear models to prove our claims in Section \ref{sec:linear_case_theorem}.
%i.e., mini-batch stochastic gradient descent. 
%More details about the experiments can be found in section \ref{sec:exps}.

The contributions of this paper are:
\begin{enumerate}
    \item In the matched setting (i.e., train and test distributions are matching), we prove a lower-bound (Theorem \ref{thm:distr_match}) on the generalization error for gradient descent\footnote{This part relies on population gradients with polynomial gradient precision as in \cite{AS20,abbe2021power}} (GD), that degrades for functions having large {\it noise-sensitivity} (or low noise-stability), supporting thereby with a formal result a conjecture put forward in \cite{Zhang2021PointerVR}. 
    %This part relies mainly on an extension of the works \cite{AS20,AbbeINAL}.
    \item In the mismatched setting, specifically in the canonical holdout where a single feature is frozen at training and released to be uniformly distributed at testing, we hypothesize that {\it (S)GD on the square loss\footnote{We do not expect the square loss to be critical, but it makes the connection to the influence more explicit.} and on certain network architectures such as MLPs and Transformers has an implicit bias towards low-degree representations when learning logical functions such as Boolean PVR functions.} This gives a new insight into the implicit bias study of GD on neural networks for the case of logical functions.  

We then show (Lemma \ref{lem:Boolean_inf}) that under this hypothesis, the generalization error in the canonical holdout setting is given by the Boolean influence, a central notion in Boolean Fourier analysis.
    % \item We provide experiments supporting this hypothesis for various target functions and architectures such as MLPs and the Transformers \cite{vaswani2017attention-transformer} in Section \ref{sec:exps} and Appendix \ref{appendix:experiments}. We back up the experiments with theoretical arguments for multi-layer linear models in Appendix \ref{sec:linearmodels}. These rely on mini-batch stochastic gradient descent (see Section \ref{sec:exps} for more details). 
    % \item We establish formally the hypothesis for GD and linear regression models in Section \ref{sec:linearmodels}. 
    \item We provide experiments supporting this hypothesis for various target functions and architectures such as MLPs and the Transformers \cite{vaswani2017attention-transformer} in Section \ref{sec:exps} and Appendix \ref{appendix:experiments}. These rely on mini-batch stochastic gradient descent (see Section \ref{sec:exps} for more details). 
    \item We establish formally the hypothesis for GD and linear regression models in Section \ref{sec:linearmodels}, and conduct experiments to support it on multi-layer neural networks of large depths and small initialization scales in Section \ref{sec:linearmodels} (with further arguments in Appendix \ref{app:linear-nns}).
\end{enumerate}

\subsection{The low-degree bias hypothesis: illustration} \label{sec:lowdegreebias}
We now discuss the hypothesis put forward in this paper that ``GD on certain architectures such as MLPs and Transformers has an implicit bias towards lower-degree representations when learning PVR and, more generally, logical functions.'' Before starting our illustration let us recall some basic notions from Boolean analysis (we refer to~\cite{o'donnell_2014} for details). Any Boolean function $f:\{\pm 1\}^n \to \bR$ can be written in terms of its Fourier-Walsh transform: $f(x) = \sum_{T \subseteq [n]} \hat f(T) \chi_T(x)$, where $\chi_T(x)=\prod_{i \in T}x_i $ and $   \hat f (T) = 2^{-n}\sum_{x \in \{\pm 1\}^n} f(x) \chi_T(x)  $ are respectively the basis elements and the coefficients of the Fourier-Walsh transform of $f$. 
% The Boolean influence is then defined as follows.
% We now present the definition of the Boolean influence. 
\begin{defin}[Boolean influence~\cite{o'donnell_2014}]
Let $f: \{\pm 1\}^n \to \mathbb{R}$ be a Boolean function and let $\hat f$ be its Fourier-Walsh transform. The Boolean influence of variable $k \in [n]$ on $f$ is defined by  
% \begin{align}
$ \mathrm{Inf}_{k}(f):=\sum_{T \subseteq [n] : k \in T} \hat{f}(T)^2.$
% \end{align}
In particular, if $f:\{\pm 1 \}^n \to \{ \pm 1 \}$,  $\mathrm{Inf}_{k}(f)=\mathbb{P}(f(X)\ne f(X+e_k)),$ where $X+e_k$ corresponds to the vector obtained by flipping the $k$-th component of $X$.  
\end{defin}

Consider the following example of a PVR function with a 1-bit pointer, 2 overlapping windows of length 2, and parity for the aggregation function. We consider $f : \{\pm 1\}^n \to \{ \pm 1\}$ (i.e., with $\pm 1$ variables instead of $0,1$, to simplify the expressions), with $f$ given by 
\begin{align}
f(x_1,x_2,x_3,x_4)&=
\frac{1+x_1}{2}x_2x_3 +\frac{1-x_1}{2}x_3x_4 .
\end{align}
%\begin{align}
%f(x_1,x_2,x_3,x_4,x_5,x_6)&=
%\frac{1+x_1}{2}\frac{1+x_2}{2}x_3x_4 +\frac{1+x_1}{2}\frac{1-x_2}{2}x_4x_5 \\
%&+\frac{1-x_1}{2}\frac{1+x_2}{2}x_5x_6 +\frac{1-x_1}{2}\frac{1-x_2}{2}x_6x_3
%\end{align}
We can rewrite $f$ in terms of its Fourier-Walsh expansion (i.e., pulling out all the multivariate monomials appearing in the function, see Section~\ref{sec:influence_and_generalization} for more details), which gives
\begin{align}
f(x_1,x_2,x_3,x_4)&=\frac{1}{2}x_2x_3+\frac{1}{2}x_3x_4+\frac{1}{2}x_1x_2x_3- \frac{1}{2}x_1x_3x_4 .
\end{align}

%Such a function is a staircase function of leap 2 and degree 3 according to the staircase characterization in \cite{mergedstaircase}. For instance, if we train a 2-layer neural network with mean-field initialization as in~\cite{mergedstaircase}, such a function will be learned due to its staircase nature with a sample complexity described in~\cite{mergedstaircase}. or a 2-layer neural network in the mean-field regime as defined in \cite{mergedstaircase}. 

Consider now training a neural network such as a Transformer as in Section~\ref{sec:exps} on this function, with quadratic loss, and a canonical holdout corresponding to freezing $x_2=1$ at training.
Under this holdout, and under the `low-degree implicit bias' hypothesis, the low-degree monomials are learned first (see experiments in Section~\ref{sec:exps}), resulting in the following function being learned at training: 
\begin{align}
f_{-2}(x_1,x_2,x_3,x_4) &= \frac{1}{2}x_3+\frac{1}{2}x_3x_4+\frac{1}{2}x_1x_3- \frac{1}{2}x_1x_3x_4  =\frac{1+x_1}{2}x_3 +\frac{1-x_1}{2}x_3x_4. 
\end{align}
% which is equivalent to
% \begin{align}
% f_{-2}(x_1,x_2,x_3,x_4)
% &=\frac{1+x_1}{2}x_3 +\frac{1-x_1}{2}x_3x_4. 
% \end{align}
Thus, according to Lemma \ref{lem:Boolean_inf} proved in Appendix \ref{sec:proof_mismatch}, the generalization error is given by
\begin{align}
\frac{1}{2}\E (f(X)- f_{-2}(X) )^2 = \pr(f(X)\ne f(X+e_2))= \frac{1}{2}, 
\end{align}
which is the probability that flipping the frozen coordinate changes the value of the target function, i.e., the Boolean influence (where we denoted by $X+e_2$ the vector obtained by flipping the second entry in $X$).
As shown in this paper, neural networks tend to follow this trend quite closely, and we can prove this hypothesis on simple linear models. Notice that an ERM-minimizing function could have taken a more general form than a degree minimizing function, i.e., 
\begin{align}
f_{-2}^{\rm ERM}(x) := \frac{1+x_2}{2} f_{-2}(x) +\frac{1-x_2}{2} r(x) 
\end{align}
for any choice of $r: \{\pm 1\}^4 \to \{ \pm 1 \}$. In the special case of $r=0$, $f_{-2}^{\rm ERM}$ corresponds to $f_{-2}$ (the low-degree representation). 
%and the generalization error would be given in this case by 
%\begin{align}
%\mathrm{gen}_2 &= \frac{1}{2} \E ( f(X)-f_{-2}^{\rm RERM}(X))^2  \\
%& = 1- \E[f(x) f_{-2}^{\rm RERM}(x) ]\\
%& = 1- (\frac 12 \E_{x_2=1}[f(x) f_{-2}^{\rm RERM}(x) ] + \frac 12 \E_{x_2=-1}[f(x) f_{-2}^{\rm RERM}(x) ] )\\
%& =  1- (\frac 12 \E_{x_2=1}[f(x) f_{-2}(x) ] + \frac 12 \E_{x_2=-1}[f(x) R(x)] )\\
%& = 1- (\frac 12 \cdot 1 + \frac 12 \cdot 0) = \frac 12.
%\end{align}
% Also note that $f$ has lower $\ell_2$ norm than $f_{-2}$ \enote{Not sure I understand this sentence}, where one can compute the $\ell_2$ norm of $f$ (resp. $f_{-2}$) by Parseval's inequality as $\E f(x)^2 =\sum_{S\in [n]} \hat f(S)^2$.
% \enote{Not sure I understand this sentence. How would freezing one variable increase the l2 norm?}.
For instance, among such ERM-minimizers, one can check that the minimum $\ell_2$-norm interpolating solution would be given by
% (using Parseval's theorem $\E g(x)^2 =\sum_{S\in [n]} \hat g(S)^2$, where the $\hat g(S)$ are the Fourier coefficients of $g$)
\begin{equation}
     f_{-2}^{\ell_2}(x) := \frac{1}{4}(x_3+x_2x_3)+\frac{1}{4}(x_3x_4+x_2x_3x_4)+\frac{1}{4}(x_1x_3 + x_1x_2x_3)- \frac{1}{4}(x_1x_3x_4 + x_1x_2x_3x_4).
\end{equation}
This gives a generalization error of $\frac{1}{2}\E (f(X)- f_{-2}^{\ell_2}(X))^2 = 4(\frac{1}{4})^2 = \frac{1}{4}$, i.e., half the error of $f_{-2}$, yet still bounded away from 0. 

In order to improve on this, under the same canonical holdout with $x_2=1$, one would like to rely on a type of minimum description length bias, since describing $f$ may be more efficient than $f_{-2}$ due to the stronger symmetries of $f$. Namely, $f$ corresponds to taking the parity on the middle two bits if $x_1=1$, and on the last two bits otherwise. On the other hand, $f_{-2}$ requires changing the function depending on $x_1=1$ or $x_1=-1$, since it is once the function $x_3$ and once the function $x_3x_4$. 
%To see this on a more specific case, consider the target function given by $\prod_{i \in [4]}x_i$ and measure the description length by the number of bits needed to describe the function. The low-degree solution under the canonical holdout $x_2=1$ is $\prod_{i \in [4], i\not 2}^4x_i$, which requires 2 more bits to be described. 
So an implicit bias that would exploit such symmetries, featuring in PVR tasks, would give a different solution than the low-degree implicit bias, and could result in lower generalization error. We leave to future work to investigate this `symmetry compensation' procedure. 
\subsection{Related literature}
\paragraph{GD lower bounds.}
Several works have investigated the difficulties of learning a class of functions with descent algorithms and identified measures for the complexity in this setting. In particular,~\cite{blum1994weakly, kearns1998efficient} prove that the larger the statistical dimension of a function class is, the more difficult it is for an statistical query (SQ) algorithm to learn. We refer to~\cite{abbe2021power,Das2019learnability} for further references on SQ-like lower bounds.
% , in particular ~\cite{Das2019learnability} shows that the class of function of random neural networks is not SQ learnable if the average correlation with an arbitrary fixed function is negligible.
For GD with mini-batch and polynomial gradient precision,~\cite{AS20} uses the $m$-Cross-Predictability (CP) of a class of functions to show that classes  sufficiently small CP and large batch-size are not efficiently learnable. This is further generalized in \cite{abbe2021power} to a broader range of batch size and gradient precision. 
In~\cite{arora2018stronger}, the noise-stability for deep neural networks is defined as the stability of each layer’s computation to noise injected at lower layers, and is used to show correctness of a compression framework. 
However, no bounds on the generalization error of GD depending on the noise stability of the target function is derived in this work. 
% , and similarly for GD-like algorithms~\cite{abbe2021power}% On the other hand,~\cite{AbbeINAL} proves lower bounds for a \emph{single} target function, depending on the average correlation between the target function and the network at initialization.
% ***
% \enote{need rephrasing}
The noise-stability, the statistical dimension and the cross-predictability are measures for some given function or function class. One can obtain measures that are defined for a dataset and a network architecture. For that purpose, \cite{AbbeINAL} introduced the ``Initial Alignment'' (INAL) between the target function and the neural network at initialization and proves that for fully connected networks with Gaussian i.i.d. initialization, if the INAL is negligible, then GD will not learn efficiently. We remark that the problem of approximating and learning Boolean functions appear in other areas as well. For instance,~\cite{heidari2019boolean} considered the problem of approximating Boolean functions, using functions coming from restricted classes (namely k-juntas and linear Boolean functions);~\cite{jha2019explaining} proposes two algorithms to learn sparse Boolean formulae; and ~\cite{udovenko2021milp} proposes techniques for modeling Boolean functions by mixed-integer linear inequalities.

% \paragraph{Noise Stability.}  
% A more architecture-dependent lower-bound is derived in \cite{abbe2020poly}, where the junk-flow is essentially used as replacement of the number of queries, and which depends on the type of architecture and initialization albeit being implicit.

\paragraph{Implicit bias.}
The implicit bias of neural networks trained with gradient descent has been extensively studied in recent years~\cite{neyshabur2014search,neyshabur2017exploring,moroshko2020implicit,ji2019characterizing,gunasekar2018characterizing}. In particular, \cite{soudry2017implicit} proved that gradient descent on linearly-separable binary classification problems converges to the maximum $\ell_2$-margin direction. Several subsequent works studied the implicit bias in classification tasks on various networks architectures, e.g., homogeneous networks~\cite{lyu2019gradient}, two layers networks in the mean field regime~\cite{chizat2020implicit}, linear and $\ReLU$ fully connected networks~\cite{vardi2021margin}, and convolutional linear networks~\cite{gunasekar2018implicit}. Among regression tasks, the problem of implicit bias has been analysed for matrix factorization tasks~\cite{gunasekar2017implicit,razin2020implicit,arora2019implicit}, and also gradient flow on diagonal networks~\cite{pesme2021implicit}.
% including fully connected~\cite{vardi2021margin,chizat2020implicit} and convolutional~\cite{gunasekar2018implicit} neural networks.
% \cite{soudry2017implicit}: on homogeneous linear predictors on linearly separable datasets, the predictor converges to the direction of the max-margin (hard margin SVM) solution.
% \cite{gunasekar2018characterizing}: characterized implicit bias of generic optimization methods depending on the optimization geometry.
% \cite{gunasekar2018implicit}: difference between implicit bias of linear convolutional neural networks and linear fully connected neural networks.\\
% \cite{vardi2021margin}: margin maximization for $\ReLU$ and linear neural networks. \cite{chizat2020implicit}: on wide two-layer neural networks, GD gows to max-margin classifier
However, all these works consider functions with real inputs, instead of logical functions which are the focus of this work. On the other hand, as discussed in Section~\ref{sec:lowdegreebias}, the Boolean influence generalization characterization reflects the implicit bias of GD on neural networks to learn low-degree representations. Similar types of phenomena can implicitly be found in~\cite{Xu2018frequency,xu2019frequency,rahaman2019spectral}, in particular as the ``spectral bias'' in the context of real valued functions decomposed in the classical Fourier basis (where the notion of lower degree is replaced by low frequencies). In ~\cite{abbe2021staircase,mergedstaircase}, the case of Boolean functions is considered as in this paper, and it is established that for various `regular' architectures (having some symmetry in their layers), gradient descent can learn target functions that satisfy a certain `staircase' property. 
% This means that the target function must have Fourier-Walsh coefficients that are nested and grounded in the low-degree one, such as the vanilla staircase $x+1+x_1x_2+x_1x_2x_3+\ldots$.
However, these papers do not investigate lower-bounds in terms of noise stability, neither distribution shift as in the canonical holdout. We note that, in the context of Boolean functions, the problem of regularizing a learning algorithm to avoid overfitting appears in other areas of research, e.g., in the analysis of fitness functions in biology~\cite{epistaticnet}.
% the authors develop a method for regularize DNNs exploiting evidence that epistatic interactions in many fitness functions are sparse. 
% \paragraph{Memorization/Reasoning.} The Pointer Value Retrieval (PVR)~\cite{Zhang2021PointerVR} constitutes a framework for understanding the interplay between memorization and reasoning in neural networks, and for characterizing the limits of generalization of deep learning algorithms. Several works tackled similar questions. For instance, it has been studied that overparameterized neural networks with enough capacity to memorize the entire training set have surprising generalization properties~\cite{zhang2016understanding,belkin2021fit,neyshabur2017pac}

\paragraph{Distribution shift.} 
% \enote{putting down some references for now. Will compress later.
% TODO: check references in PVR}
Many works were aimed at characterizing when a classifier trained on a training distribution (also called the ``source'' distribution) performs well on a different test domain (also called the ``target'' distribution)~\cite{datasetshift2009}. On the theoretical side,~\cite{BenDavid2018theory} obtains a bound of the target error of a general empirical risk minimization algorithm in terms of the source error and the divergence between the source and target distribution, in a setting where the algorithm has access to a large dataset from the source distribution and few samples from the target distribution. 
% One solution to domain adaptation is to learn domain invariant feature representations.
We refer to~\cite{shen2017wasserstein} for a further result in a similar setting. 
% In a similar setting,~\cite{shen2017wasserstein} proposes a novel approach to learn domain invariant feature representations, by using a neural network to learn the empirical Wasserstein distance between the source and target samples and optimizes a feature extractor network to minimize the estimated Wasserstein distance. 
Instead, in this work we focus on gradient descent on neural networks in the setting where no data from the target distribution is accessible. 
% For linear regression problems,~\cite{lei2021nearoptimalregression} develops estimators that achieve minimax linear risk under distribution shift. 
On the empirical level, several benchmarks have been proposed to evaluate performance for a wide range of models and distribution shifts~\cite{wiles2022a,miller2021accuracyontheline,sagawa2021extendingwilds}.
% mitigate the distribution shifts by mean of unlabeled data (a.k.a. unsupervised adaptation), e.g. the WILDS~\cite{}.  defines a framework to . \cite{miller2021accuracyontheline} shows empirically that out-of-distribution performance is strongly correlated with in-distribution performance for a wide range of models and distribution shifts.
% and show that the strong correlation holds across architectures, hyperparameters, training set size and training duration.
% \\
% Some works on re-weighing the training example to make them resemble the test distribution.\\
% \\
% \cite{tifrea2020semisupervised} develops a new procedure for \emph{semi-supervised novelty detection} (SSND) for identifying out-of-distribution samples. \enote{may be not relevant}
% \\
% \cite{andreassen2021evolution} conducts an empirical analysis of the robustness to distribution shift of different models during fine-tuning.\\
Despite this significant body of works on distribution shift, we did not find works that related the generalization error under holdout shift in terms of the Boolean influence.

\section{Matched setting: a formal lower-bound on noise stability and generalization}
Our first result provides a lower bound on the generalization error in the setting where the training and test distributions of the inputs are both uniform on the Boolean hypercube, i.e., the ``matched setting''. We directly relate the generalization error achieved by GD, or SGD with large batch-size, to the complexity of the task, providing theoretical support to the empirical claim in~\cite{Zhang2021PointerVR}, that complex tasks are more difficult to learn for GD/SGD on neural networks. The latter work used the noise sensitivity as a dual measure of the target function complexity, whereas we use here the noise stability ($\Stab_{\delta}[f]$).
%We consider inputs $\pm 1$, instead of $0,1$.
% We provide a formal definition of $\Stab_\delta[f]$.
\begin{defin}[Noise stability] \label{def:stability}
Let $f:\{\pm 1\}^n \to \bR$ and $\delta \in [0,1/2]$. Let $X$ be uniformly distributed on the Boolean $n$-dimensional hypercube, and let $Y$ be formed from $X$ by flipping each bit independently with probability $\delta$. We define the $\delta$-noise stability of $f$ by
% \begin{align}
    % \Stab_{\delta}[f] := \sum_{k=0}^\infty (1-2\delta)^k W^{k}[F]
    $\Stab_{\delta}[f] := \E_{(X,Y)}[f(X) \cdot f(Y)]$.
\end{defin}
\noindent
Intuitively, $\Stab_\delta[f]$ measures how stable the output of $f$ is to a perturbation that flips each input bit with probability $\delta$, independently. The noise stability can be easily related to the noise sensitivity $\NS_\delta[f]$\footnote{Specifically, for binary-valued functions, $\NS_\delta[f] = \frac 12 - \frac 12 \Stab_\delta[f]$.}, used in~\cite{Zhang2021PointerVR}.
% The noise stability is the correlation between $F(x) $ and $F(y) $ if $x_i 
% \overset{iid}{\sim} \Rad(1/2)$ and $y$ is such that for each $i$ $y_i = \BSC_\delta (x_i)$ (i.e. $y$ is formed from $x$ by flipping each bit independently with probability $\delta$).\\

The generalization error depends as well on the complexity of the network, which is quantified in terms of the number of edges in the network, the number of time steps, and gradients precision in the gradient descent algorithm. We give one more definition before stating our result. 
\begin{defin}[N-Extension] \label{def:Nextension} For a function $f : \bR^n \to \bR$ and for $N>n$, we define its $N$-extension $\bar f: \bR^N \to \bR$ as
% \begin{align}
   $ \bar f(x_1,x_2,...,x_n,x_{n+1},...,x_N) = f(x_1,x_2,...,x_n).$
% \end{align}
\end{defin}
% Our result reads as follows.
\begin{thm} \label{thm:distr_match}
Consider a fully connected neural network of size $E$ with initialization that is target agnostic in the sense that it is invariant under permutations of the input\footnote{One could consider other groups of invariances than the full permutation group; this is left for future work.}. Let $f: \{\pm 1\}^n \to  \{\pm 1\} $ be a balanced target function. Let $\bar f$ be the $2n$-extension of $f$ as defined in Definition~\ref{def:Nextension}. Let $f_{\mathrm{NN}}^{(t)}$ be the output of noisy-GD with gradient range $A$, batch-size $b$, learning rate $\gamma$ and noise scale $\sigma$ (see Def.~\ref{def:noisyGD}) after $t$ time steps. Then, for $\delta$ small enough\footnote{Generally, this holds for any $\delta$ such that $CP({\rm orb}(\bar f)) \le \Stab_{\delta}[f]$; in particular, this holds for $\delta<1/4$ under input doubling or for some $\delta>0$ under non-extremal and non-dense assumptions.} and for $b$ large enough\footnote{This holds for $b \geq 1/CP({\rm orb}(\bar f))$.}, the generalization error satisfies
\begin{align}
{\rm gen}\left(\bar f, f_{\mathrm{NN}}^{(t)}\right)  \ge 1/2 -\frac{\gamma t \sqrt{E} A}{\sigma} \cdot \Stab_{\delta}[f]^{1/4}.
\end{align}
% \begin{itemize}
%     % \item $\Stab_{\delta}(f) $ as in Definition~\ref{def:stability}
%     % % := \E f(X)f(X_{\delta})=1 -2 \mathbb{P}(f(X)\ne f(X_\delta))$ 
%     % % $= \sum_{k=0}^n (1-2\delta)^k \hat{W}_{\le k}[f]^2 $, where $ \hat{W}_{\le k}[f]^2 = \sum_{S:|S|\leq k} $
% % \item $\mathrm{NN}$ a neural network of size $E$ with initialization that is target agnostic (i.e., invariant under permutations of the input),
% \item (alternatively, one can obtain a similar bound by avoiding input doubling if $f$ satisfies some regularity assumptions, see Assumption 1),
% \item $f_{\mathrm{NN}}^{(t)}$ the output of SGD with gradient precision $C$ and batch-size $b$ after $t$ time steps.
% \end{itemize}
% Then, for $\delta$ small enough\footnote{Generally, this holds for any $\delta$ such that $CP(U^n,{\rm orb}(f)) \le \Stab_{\delta}(f)$; in particular, this holds for $\delta<1/4$ under input doubling or for some $\delta>0$ under non-extremal and non-dense assumptions.} and for $b$ large enough\footnote{This holds for $b \geq 1/CP(U^n,{\rm orb}(f))$}, the generalization error satisfies
% \begin{align}
% {\rm gen}\left(\bar f, f_{\mathrm{NN}}^{(t)}\right)  \ge 1/2 -C\cdot t \cdot E  \cdot \Stab_{\delta}(f).
% \end{align}
\end{thm}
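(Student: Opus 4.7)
The plan is to reduce the statement to a known SGD lower bound expressed in terms of cross-predictability, and then to bound the cross-predictability of the relevant orbit in terms of the noise stability of $f$. First I would observe that since the initialization is invariant under permutations of the inputs, the law of the SGD trajectory when trained on a target $\bar f \circ \sigma$ equals in distribution its law when trained on $\bar f$. Consequently, the generalization error achieved by SGD on $\bar f$ coincides with the one it would achieve on a target drawn uniformly from the orbit $\orb(\bar f) := \{\bar f \circ \sigma : \sigma \in S_{2n}\}$, so the relevant complexity measure becomes the one attached to this orbit rather than to $\bar f$ alone.

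Second, I would invoke the SGD lower bound of \cite{AS20,abbe2021power}: for a class $\mathcal{F}$ satisfying $\CP(\mathcal{F}) \le c$, SGD with batch size $b \ge 1/c$ and gradient precision $A$, run for $t$ steps on a network with $E$ edges, satisfies
\begin{align}
\gen(\mathcal{F}, f_{\mathrm{NN}}^{(t)}) \ge \frac{1}{2} - C \cdot t \cdot \sqrt{E} \cdot c^{1/4}
\end{align}
for a universal constant $C$. The assumption on $b$ in the theorem matches the condition $b \ge 1/\CP(\orb(\bar f))$ stated in the footnote, so applying this bound with $\mathcal{F} = \orb(\bar f)$ yields a bound of the desired form, provided one can upper bound $\CP(\orb(\bar f))$ by $\Stab_\delta[f]$.

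Third, the core technical task is to show $\CP(\orb(\bar f)) \le \Stab_\delta[f]$ for $\delta$ in the allowed range. Expanding the definition,
\begin{align}
\CP(\orb(\bar f)) = \E_{\sigma, \sigma'}\bigl(\E_X[\bar f(\sigma X)\, \bar f(\sigma' X)]\bigr)^2,
\end{align}
with $\sigma, \sigma'$ uniform in $S_{2n}$ and $X$ uniform on $\{\pm 1\}^{2n}$. Writing $\tau = \sigma' \sigma^{-1}$ and expanding $\bar f$ in the Fourier basis, the inner expectation reduces to a sum over subsets $S \subseteq [n]$ with $\tau(S) \subseteq [n]$ of products $\hat f(S)\, \hat f(\tau(S))$. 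The intuition is that, under the $2n$-extension, a uniformly random permutation $\tau \in S_{2n}$ behaves on the first $n$ coordinates like a noise operator: each coordinate $i \in [n]$ is either preserved (when $\tau^{-1}(i) \in [n]$) or replaced by an independent uniform bit (when $\tau^{-1}(i) \in [n+1, 2n]$), with the ``replace'' event happening with probability about $1/2$ per coordinate. This heuristically matches an effective noise stability at some $\delta < 1/4$, consistent with the footnote.

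The main obstacle lies in this last step. The permutation kernel on $\{\pm 1\}^{2n}$ and the i.i.d.\ noise kernel on $\{\pm 1\}^n$ are not identical: a random permutation induces correlated ``keep-or-replace'' events across the $n$ coordinates, whereas the noise operator acts independently. Turning the heuristic correspondence into the clean inequality $\CP(\orb(\bar f)) \le \Stab_\delta[f]$ will likely require a careful Fourier-combinatorial comparison, bounding the coefficient of each monomial $\hat f(S)^2$ on both sides, or alternatively a monotone coupling between the two kernels. Beyond this, the reduction to cross-predictability and the invocation of the SGD lower bound from \cite{AS20,abbe2021power} are essentially black-box applications of existing results.
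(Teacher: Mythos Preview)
Your overall plan matches the paper's proof exactly: reduce to the orbit via permutation-invariance of the initialization, invoke the cross-predictability lower bound of \cite{AS20}, and then bound $\CP(\orb(\bar f))$ by $\Stab_\delta[f]$. The first two steps are indeed black-box, as you say.

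Where you leave a gap is in the third step, which you flag as the ``main obstacle.'' Your heuristic of interpreting a random permutation of $[2n]$ as a coordinate-wise noise operator on $[n]$ is suggestive but, as you note, not literally correct because of the dependence between coordinates, and the coupling you propose would be awkward to carry out. The paper's argument is simpler and more direct than the route you sketch. After the Fourier expansion you already wrote down,
\[
\CP(\orb(\bar f)) = \E_\pi\Bigl(\sum_{T\subseteq[n]} \hat f(T)\,\hat f(\pi(T))\,\1\{\pi(T)\subseteq[n]\}\Bigr)^2,
\]
one applies Cauchy--Schwarz to the inner sum (for each fixed $\pi$) to get
\[
\Bigl(\sum_T \hat f(T)\hat f(\pi(T))\1\{\pi(T)\subseteq[n]\}\Bigr)^2 \le \Bigl(\sum_S \hat f(\pi(S))^2\Bigr)\Bigl(\sum_T \hat f(T)^2 \,\1\{\pi(T)\subseteq[n]\}\Bigr).
\]
The first factor equals $1$ by Parseval, since $f$ is $\pm 1$-valued. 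Taking $\E_\pi$ of the second factor gives $\sum_T \hat f(T)^2\,\pr_\pi(\pi(T)\subseteq[n])$. For a uniformly random $\pi\in S_{2n}$ and $|T|=k$, this probability is $\binom{n}{k}/\binom{2n}{k} \le 2^{-k} \le (1-2\delta)^k$ for any $\delta \le 1/4$. Summing over $k$ yields exactly $\sum_k (1-2\delta)^k W^k[f] = \Stab_\delta[f]$.

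So the ``careful Fourier-combinatorial comparison'' you anticipate is just Cauchy--Schwarz plus a hypergeometric count; no monotone coupling of kernels is needed, and the dependence you worried about is handled automatically by collapsing the two $\hat f$ factors via Cauchy--Schwarz before averaging over $\pi$.
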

\noindent
Theorem~\ref{thm:distr_match} states a lower bound for learning in the extended input space. 
%We consider the class of input-permutation invariant functions (the orbit) of the $2n$-extended $\bar f$, i.e. $\orb(\bar f) = \{ \bar f \circ  \pi : \pi \in S_{2n} \}$, where $S_{2n}$ denotes the set of permutations of $2n$ elements. 
We use the input doubling to guarantee  that the hypothesis class $\mathcal{F}$ resulting from the orbit of $\bar f$, i.e., $\orb(\bar f) = \{ \bar f \circ  \pi : \pi \in S_{2n} \}$, is not degenerate to a single function, as this could then be learned using a proper choice of the initialization (i.e., one can simply set the weights of the neural network at initialization to represent the unique function, if the network has enough expressivity). Instead, the input doubling prohibits such representation shortcut and ensures that the structural properties of the function is what creates the difficulty of learning, irrespective of the choice of the initialization. For instance, consider the full parity function on all of the input bits. The orbit of this function contains only that specific function, and one can learn that function by choosing a proper initialization on a neural net of depth 2. 
%, and surprisingly, even with a certain i.i.d.\ random initialization if chosen properly; see~\cite{EnricDraft}. 
However, with an input doubling, this function becomes hard to learn {\it no matter what} the initialization is~\cite{kearns1998efficient,AS20}. 
One can remove this input doubling requirement by assuming that $f$ is non-extremal (i.e., no terms of degree $\theta(n)$ in the Fourier basis) and non-dense (i.e., poly$(n)$-sized Fourier spectrum), see Appendix~\ref{sec:no_input_augmentation}.

The proof of Theorem~\ref{thm:distr_match} uses~\cite{AS20} which obtains a lower-bound in terms of the cross-predictability (CP) (instead of the noise stability). The CP is a complexity measure of a \emph{class} of functions, rather than a single function. For the orbit of $\bar f$, the CP is defined as
% \begin{align}
$\CP(\orb(\bar f)) = \E_{\pi \sim_U S_{2n} } [\E_{X \sim_U {\mathbb{F}_2^{2n}}} [ \bar f(X) \cdot \bar f \circ \pi (X) ]^2 ]$.
% \end{align}
% where $ \cU_{S_{2n}}$ denotes the uniform distribution on $S_{2n}$ and $\cU_{2n}$ denotes the uniform distribution on the hypercube. 
We give a more general definition of CP in Appendix~\ref{sec:proof_distr_match} (we believe that the CP also extends to other invariances than permutations and non i.i.d. distributions). 

Theorem~\ref{thm:distr_match} states that if the target function is highly noise unstable, specifically if there exists $\delta< 1/4$ such that $ \Stab_\delta[f]$ decreases faster than any inverse polynomial in $n$, then GD will not learn the $2n$-extension of $f$ (or $f$ itself if it is non-extremal/dense) in polynomial time and with a polynomially sized neural network initialized at random. So in that sense, the noise-stability gives a proxy to generalization error, as observed in~\cite{Zhang2021PointerVR}. More specifically, this result is about failure of the weakest form of learning no matter what the architecture is. One could also consider `regular' architectures (such as with isotropic layers) and stronger learning requirements; for this it is known that the `Fourier leap' from~\cite{abbe2021staircase} is a relevant complexity measure, and we leave  investigations of regular architectures to future work. More details of the proof of Theorem~\ref{thm:distr_match} are in Appendix~\ref{sec:proof_distr_match}. In Appendix~\ref{sec:NSforPVR}, we explain how the noise stability of PVR functions can be computed, and the implications to the result of Theorem~\ref{thm:distr_match}.

\section{Mismatched setting} \label{sec:mismatch_setting}
In this section, we focus on the generalization error under the distribution shift setting, more specifically the canonical holdout setting defined in Definition~\ref{def:canonical_distr_shift}. Namely, assume that at training component $k$ is frozen to $1$, and assume it to be released to $\text{Unif}\{\pm 1 \}$ at testing.  
Our experiments show that in this setting, for some relevant architectures, the generalization error is close to the value of a standard measure in Boolean analysis, namely the Boolean influence of variable $k$ on $f$.

\subsection{Boolean influence and generalization} \label{sec:influence_and_generalization}
To explain the connection between generalization error and Boolean influence, we start with a simple lemma relating the Boolean influence to the $\ell_2$-distance between the true target function and the function obtained by freezing component $k$ (which we call the ``frozen function'').
\begin{lemma} \label{lem:Boolean_inf}
Let $f: \{ \pm 1\}^n \to \bR$ be a Boolean function and let $f_{-k}$ be defined as $f_{-k}(x):=f(x_{-k})$ where $x_{-k}(i)=1$ if $i = k$ and $x_{-k}(i)=x(i)$ otherwise. 
Then,
% \begin{align}
  $\frac{1}{2} \mathbb{E}_{X}  (f(X)-f_{-k}(X) )^2 = \mathrm{Inf}_{k}(f).$
% \end{align}
\end{lemma}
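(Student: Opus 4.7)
The plan is to expand everything in the Fourier--Walsh basis and let orthonormality do the work. Write $f(x) = \sum_{T \subseteq [n]} \hat f(T)\chi_T(x)$. Since setting the $k$-th coordinate to $1$ kills the $x_k$ factor in any monomial containing it, we have $\chi_T(x_{-k}) = \chi_{T \setminus \{k\}}(x)$ for every $T$, so
$$f_{-k}(x) = \sum_{T : k \notin T} \hat f(T)\chi_T(x) + \sum_{T : k \in T} \hat f(T)\chi_{T \setminus \{k\}}(x).$$
Subtracting and using $\chi_T(x) = x_k\chi_{T \setminus \{k\}}(x)$ when $k \in T$ gives the clean factorization
$$f(x) - f_{-k}(x) = (x_k - 1)\sum_{T : k \in T} \hat f(T)\chi_{T \setminus \{k\}}(x).$$

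Next, I would square this identity and take expectations under $X \sim_U \{\pm 1\}^n$. The inner sum does not involve $X_k$, so by independence of $X_k$ from the remaining coordinates, the expectation factors as $\mathbb{E}[(X_k - 1)^2]$ times $\mathbb{E}\bigl[\bigl(\sum_{T : k \in T} \hat f(T)\chi_{T \setminus \{k\}}(X)\bigr)^2\bigr]$. The first factor is $\tfrac{1}{2}\cdot 0 + \tfrac{1}{2}\cdot 4 = 2$. For the second, note that as $T$ ranges over subsets of $[n]$ containing $k$, the sets $T \setminus \{k\}$ range bijectively over subsets of $[n] \setminus \{k\}$, so the monomials $\chi_{T \setminus \{k\}}$ are pairwise distinct orthonormal characters; Parseval therefore yields $\sum_{T : k \in T}\hat f(T)^2 = \mathrm{Inf}_k(f)$. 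Dividing the product $2 \cdot \mathrm{Inf}_k(f)$ by $2$ gives the claim.

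There is essentially no obstacle: the argument is a two-line Fourier computation. The only point requiring a moment of attention is the bijection between $\{T : k \in T\}$ and subsets of $[n] \setminus \{k\}$, which is what legitimizes the Parseval step (otherwise one might worry about repeated characters). As a sanity check and alternative route, one can condition on $X_k$: the integrand vanishes on $\{X_k = 1\}$ and equals $(f(X) - f(X + e_k))^2$ on $\{X_k = -1\}$, giving $\mathbb{E}[(f(X) - f_{-k}(X))^2] = \tfrac{1}{2}\mathbb{E}[(f(X) - f(X + e_k))^2]$, which combined with the textbook identity $\tfrac{1}{4}\mathbb{E}[(f(X) - f(X + e_k))^2] = \mathrm{Inf}_k(f)$ recovers the same answer. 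This second viewpoint also makes the probabilistic interpretation stated in the definition (namely $\mathrm{Inf}_k(f) = \pr(f(X) \neq f(X + e_k))$ for $\{\pm 1\}$-valued $f$) directly visible.
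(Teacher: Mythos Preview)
Your proof is correct and follows essentially the same approach as the paper: both expand $f$ and $f_{-k}$ in the Fourier--Walsh basis, form the difference, and apply Parseval/orthonormality to obtain $2\,\mathrm{Inf}_k(f)$. The only cosmetic difference is that you factor the difference as $(x_k-1)\sum_{T\ni k}\hat f(T)\chi_{T\setminus\{k\}}(x)$ and invoke independence of $X_k$, whereas the paper leaves it as a sum over two disjoint families of characters and applies Parseval directly; these are the same computation.
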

The proof of Lemma \ref{lem:Boolean_inf} can be found in Appendix~\ref{sec:proof_mismatch}. 
% In the rest of this sectiweon, we provide intuition on the relation between generalization error and Boolean influence.
In Section~\ref{sec:exps}, we present experiments that demonstrate the relation between the Boolean influence and the generalization error for different architectures. In Section~\ref{sec:linearmodels}, we focus on linear models, namely, linear regression and linear networks.

\subsection{Experiments} \label{sec:exps}

We consider three architectures for our experiments: multi-layer perceptron (MLP) with 4 hidden layers, the Transformer~\cite{vaswani2017attention-transformer}, and MLP-Mixer~\cite{tolstikhin2021mlpmixer}. 
For each architecture, we freeze different coordinates separately and evaluate our models. In other words, 
we train the model while freezing coordinate 1, then coordinate 2 and so on, until coordinate $n$ and compare the generalization error with the Boolean influence of the corresponding coordinates of the target function.
We train our models using $\ell_2$ loss and mini-batch SGD with momentum and batch-size of 64 as the optimizer. Moreover, 
% to eliminate the randomness caused by the choice of the training samples and the initialization of the models, 
we have repeated each experiment 40 times and averaged the results. 
Furthermore, note that for the MLP model, we pass the Boolean vector directly to the model. However, for the Transformer and MLP-Mixer, we first encode $+1$ and $-1$ tokens into 256-dimensional vectors using a shared embedding layer, and then we pass the embedded input to the models. 
More details on training procedure as well as further experiments are presented in Appendix~\ref{appendix:experiments}.\footnote{Code: \url{https://github.com/aryol/BooleanPVR}} 
% Below, we describe the functions used for experiments of this section. 

\begin{figure}[t]
 % \vspace{1em}
     \centering
    \includegraphics[width=0.65\textwidth]{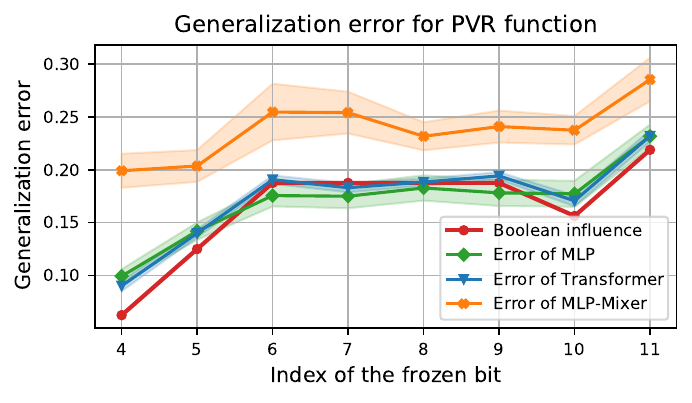}
     \caption{Comparison between the generalization loss in the canonical distribution shift setting and the Boolean influence for a PVR function with 3 pointer bits, window size 3, and majority-vote aggregation function. See Appendix \ref{appendix:experiments} for further experiments. 
    %  The x-axis corresponds to the index of the frozen coordinate, from 0 to 13 for $f_1$ and from 3 to 10 for $f_2$ (we do not freeze the pointer bits in $f_2$). On the y-axis, for each frozen coordinate we report the generalization error obtained
    %  in the corresponding holdout setting for MLP, Transformer, and MLP-mixer,a nd also the value of the Boolean influence of the corresponding variable.
     }
     \label{fig:main-exp}
    %  \vspace{-3em}
\end{figure}

\paragraph{Influence vs.\ canonical holdout generalization.}
In this section, we consider a Boolean PVR function (as in Def.~\ref{def:PVR}) with 3 pointer bits to be learned by the neural networks. For this function we set window size to 3 and use majority-vote (defined as $g(x_1, \ldots, x_r) = \mathrm{sign}(x_1+\cdots+x_r)$, outputting $-1$, $0$, or $1$) as the aggregation function on the windows.
% . We consider $f_2(x_0, x_1, \cdots, x_{10})$ such that the first three bits are the pointer, and the window size $w=3$. 
% where the output is in $\{-1, 0, +1\}$. 
% Note that in contrast to \cite{Zhang2021PointerVR} we use $\ell_2$-loss to train models for the PVR function. 
% Also, we do not freeze the pointer bits.
The generalization error of the models on this PVR function and its comparison with the Boolean influence are presented in Figure~\ref{fig:main-exp}. The x-axis corresponds to the index of the frozen coordinate, that is from 4 to 11 (we do not freeze the pointer bits). On the y-axis, for each frozen coordinate we report the generalization error obtained in the corresponding holdout setting for MLP, the Transformer, and MLP-Mixer, together with the value of the Boolean influence of the frozen coordinate on the target function.
It can be seen that the generalization error of MLP and the Transformer can be well approximated by the Boolean influence. Whereas, the generalization error of MLP-Mixer follows the trend of the Boolean influence with an offset. We remark that in Figure~\ref{fig:main-exp}, the value of the Boolean influence (and gen. error) in the PVR task varies across different indices due to boundary effects and the use of truncated windows (see Def.~\ref{def:PVR}). We refer to Appendix \ref{appendix:experiments} for further experiments on other target functions. 
\paragraph{Implicit bias towards low-degree representation.}
% \subsection{Implicit bias towards low-degree representation}
% \label{sec:exp-on-speed}
% We now provide insights on the implicit bias of GD on neural networks given by ``low-degree representations''.
% Assume we have frozen the $k$-th dimension of a function $f$. 
Consider the problem of learning a function $f$ in the canonical holdout setting freezing $x_k=1$. Denote the Fourier coefficients of the frozen function $f_{-k}$ (as defined in Lemma~\ref{lem:Boolean_inf}), by $\hat f_{-k}(S)$, for all $S \subseteq [n]:=\{1,...,n\}$ (recall, $\hat f_{-k}(S) =\E_X[f_{-k}(X) \chi_S(X)] $). For $S$ such that $k \not\in S$, the neural network can learn coefficient $ \hat f_{-k}(S) $ using either $\chi_S(x)$ or $x_k \cdot \chi_S(x) = \chi_{S \cup \{k\}}(x)$ (since these are indistinguishable at training). 
% This implies that the function learned will be of the form $f_{-k}(x) + (1-x_k)\Delta(x)$.
The low-degree implicit bias states that neural networks have a preference for the lower degree monomial $\chi_S$. More precisely, $\chi_S$ is learned faster than $\chi_{S \cup \{k\}}$ and thus the term $\hat f_{-k}(S)\chi_S(x)$ in the Fourier expansion of $f_{-k}$, is mostly learned by the lower degree monomial. Consequently,  according to Lemma \ref{lem:Boolean_inf}, the generalization error will be close to the Boolean influence. Figure~\ref{fig:speed-f1} shows this bias empirically for the above mentioned PVR function and for frozen coordinate $k=6$. 
Figure~\ref{fig:speed-f1} (left) shows that the MLP model has a strong preference for low-degree monomials, Similarly, Figure~\ref{fig:speed-f1} (bottom) shows that the Transformer also captures the monomials in the original function using monomials that exclude the frozen index. Therefore the generalization errors of the MLP and Transformer are very close to the Boolean influence as seen in Figure~\ref{fig:main-exp}. Whereas, Figure~\ref{fig:speed-f1} (right) shows that the MLP-Mixer model has a weaker preference for lower degree monomials (e.g., it learns $1$ and $x_6$ to the same extent) and hence, its generalization error follows the trend of Boolean influence with an offset, which is also presented in Figure~\ref{fig:main-exp}. We refer to Appendix \ref{appendix:experiments} for additional experiments. 

\begin{figure}[t]
\vspace{-2em}
     \centering
    \begin{subfigure}{0.48\textwidth}
         \centering
         \includegraphics[width=\textwidth]{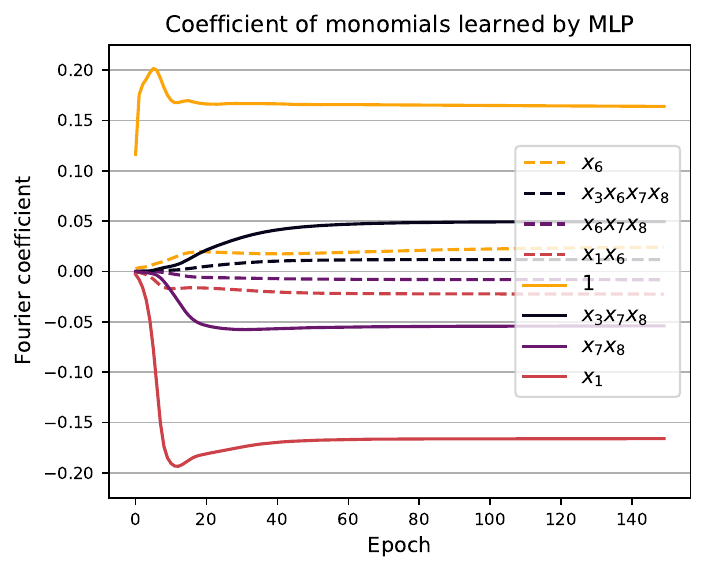}
        %  \caption{MLP}
        %  \label{fig:speed-f1-mlp}
     \end{subfigure}
     \hfill
     \begin{subfigure}{0.48\textwidth}
         \centering
         \includegraphics[width=\textwidth]{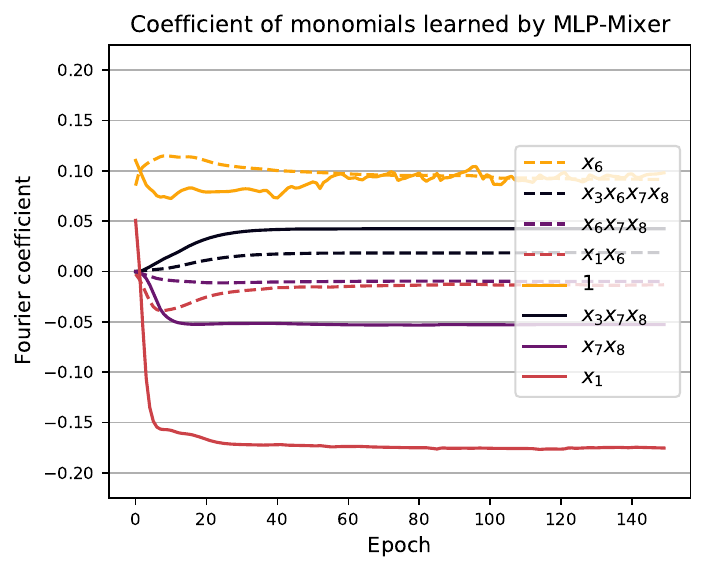}
        %  \caption{Transformer}
        %  \label{fig:speed-f1-transformer}
     \end{subfigure}
    \begin{subfigure}{0.48\textwidth}
         \centering
         \includegraphics[width=\textwidth]{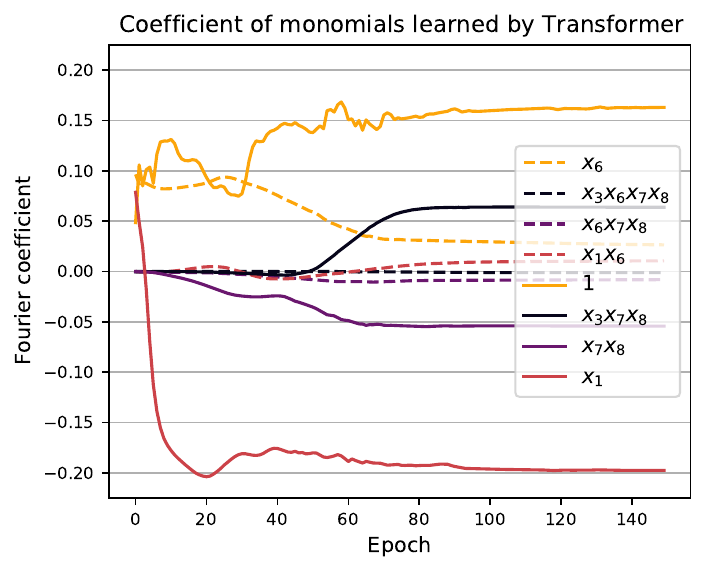}
        %  \caption{Transformer}
        %  \label{fig:speed-f1-transformer}
     \end{subfigure}
     \hfill
        % \caption{The coefficients of a selected group of monomials learned by MLP (left) and MLP-Mixer (right) when learning the aforementioned PVR function, with $x_{6}=1$ frozen during the training. The coefficient of these monomials in the original function are $\hat f(\{6\}) = 0.1875$, $\hat f(\{3,6,7,8\}) = 0.0625$, $\hat f(\{6,7,8\}) = -0.0625$, and $\hat f(\{1,6\}) = -0.1875$. 
        % We observe that the MLP has a stronger bias towards low-degree representation which corresponds to the Boolean influence being a tight characterization of the gen. error (see Fig.~\ref{fig:main-exp}. On the other hand, the MLP-Mixer has a weaker low-degree bias which corresponds to a larger gap between gen. error and Boolean influence.
        % }
        \caption{The coefficients of a selected group of monomials learned by the MLP (left), MLP-Mixer (right) and the Transformer (bottom) when learning the aforementioned PVR function, with $x_{6}=1$ frozen during the training. The coefficient of these monomials in the original function are $\hat f(\{6\}) = 0.1875$, $\hat f(\{3,6,7,8\}) = 0.0625$, $\hat f(\{6,7,8\}) = -0.0625$, and $\hat f(\{1,6\}) = -0.1875$. One can observe that the monomials of the lowest degree are indeed picked up first during the training of the MLP and Transformer, which explains the tight approximation of the Boolean influence for the generalization error in these cases. In contrast, the MLP-Mixer also picks up some contribution from the higher degree monomials including the frozen bit $x_6$.}
        
        \label{fig:speed-f1}
        \vspace{-2.5em}
\end{figure}

\subsection{Linear Models} \label{sec:linearmodels}
In this section, we will focus on linear functions and linear models. First, we state a theorem for linear regression models. Furthermore, we show experiments on the generalization error of linear neural networks and its relation with the depth and initialization of the model. 
% \subsubsection{A Formal Result for Linear Regression Models} \label{sec:linear_case_theorem}
\begin{thm} \label{thm:can_dist_shift}
Let $f: \{\pm 1\}^n \to  \mathbb{R}$ be a linear function, i.e., $f(x_1, \cdots, x_n) = \hat f(\emptyset) + \sum_{i=1}^n \hat f(\{i\})x_i$. Consider the canonical holdout where the $k$-th component is frozen at training for a linear regression model where weights and biases are initialized independently with the same mean and variance $\sigma^2$. Also assume the frozen function is unbiased, i.e., $\E_{X_{-k}} [f(X)]$ = 0. In this case, the expected generalization error (over different initializations) of the function learned by GD after $t$ time steps is given by $ \E_{\Theta^0}[\mathrm{gen}(f,\tilde{f}^{(t)}_{-k})]=\mathrm{Inf}_{k}(f) + o_{\sigma^2}(1) + O(e^{-ct}),$
where $c$ is a constant dependent on the learning rate. Moreover, if the frozen function is biased, the expected generalization error is equal to 
 $\E_{\Theta^0}[\mathrm{gen}(f,\tilde{f}^{(t)}_{-k})]=\frac{(\hat f(\emptyset) - \hat f(\{k\}) )^ 2}{4} + o_{\sigma^2}(1) + O(e^{-ct})$.
\end{thm}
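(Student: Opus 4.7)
The plan is to solve the population gradient-descent dynamics in closed form, exploiting the orthonormality of the Fourier--Walsh monomials under the training distribution and identifying the single conserved quantity that drives the asymptotic error. Write the model as $\tilde f_{-k}^{(t)}(x) = w_0(t) + \sum_{i=1}^n w_i(t)\, x_i$. Under the training distribution (which fixes $X_k = 1$ and samples $X_j$ uniformly on $\{\pm 1\}$ for $j \neq k$), the features $1, x_1, \ldots, x_{k-1}, x_{k+1}, \ldots, x_n$ are orthonormal, while the frozen feature $x_k \equiv 1$ is identical to the bias feature. A direct computation yields $\partial_{w_i} L = w_i - \hat f(\{i\})$ for $i \neq k$ and $\partial_{w_0} L = \partial_{w_k} L = (w_0 + w_k) - (\hat f(\emptyset) + \hat f(\{k\}))$. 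The key observation is therefore that $w_0(t) - w_k(t) = w_0(0) - w_k(0)$ is invariant under GD: this is the direction in weight space that the training distribution cannot resolve.

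I would then integrate the decoupled recursions. For $i \neq k$, $w_i(t) = \hat f(\{i\}) + (1-\eta)^t(w_i(0) - \hat f(\{i\}))$. The sum $w_0(t) + w_k(t)$ converges to $\hat f(\emptyset) + \hat f(\{k\})$ at rate $(1-2\eta)^t$, and combining with the invariant difference yields closed-form expressions for $w_0(t)$ and $w_k(t)$. Applying Parseval under the uniform test distribution,
\begin{align*}
\mathrm{gen}(f, \tilde f_{-k}^{(t)}) = \tfrac{1}{2}(\hat f(\emptyset) - w_0(t))^2 + \tfrac{1}{2}(\hat f(\{k\}) - w_k(t))^2 + \tfrac{1}{2}\sum_{i \neq k}(\hat f(\{i\}) - w_i(t))^2.
\end{align*}
The sum over $i \neq k$ is $O(e^{-ct})$, and substituting the closed forms for $w_0(t), w_k(t)$ shows that the first two residuals combine into $\tfrac{1}{4}(\hat f(\emptyset) - \hat f(\{k\}) - (w_0(0) - w_k(0)))^2 + O(e^{-ct})$, because the symmetric structure forces the same residual to appear with opposite signs in the two terms.

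Finally I would take expectation over the initialization. Since the initial weights share a common mean and variance $\sigma^2$ and are independent, $\E[w_0(0) - w_k(0)] = 0$ and $\E[(w_0(0) - w_k(0))^2] = 2\sigma^2$, so the cross term vanishes and
\begin{align*}
\E_{\Theta^0}[\mathrm{gen}(f, \tilde f_{-k}^{(t)})] = \tfrac{1}{4}(\hat f(\emptyset) - \hat f(\{k\}))^2 + \tfrac{1}{2}\sigma^2 + O(e^{-ct}),
\end{align*}
giving the biased-case formula with $\tfrac{1}{2}\sigma^2 = o_{\sigma^2}(1)$. Under the unbiased assumption $\E_{X_{-k}}[f(X)] = \hat f(\emptyset) + \hat f(\{k\}) = 0$, substituting $\hat f(\emptyset) = -\hat f(\{k\})$ collapses the main term to $\hat f(\{k\})^2$, and for a linear $f$ this is exactly $\mathrm{Inf}_{k}(f) = \sum_{T \ni k} \hat f(T)^2$.

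The main obstacle is essentially bookkeeping: correctly isolating $w_0 - w_k$ as the single conserved direction, tracking the two exponential rates ($1-\eta$ for the non-frozen coordinates and $1-2\eta$ for the bias/$k$ sum, both controlled by the learning rate so a single constant $c$ suffices), and verifying that the $\sigma^2$-dependent contributions from the initializations of the $n-1$ non-frozen weights are absorbed into $O(e^{-ct})$. Linearity of $f$ enters exactly once, in equating $\hat f(\{k\})^2$ with $\mathrm{Inf}_{k}(f)$; a non-linear target would contribute additional $\hat f(S)^2$ terms for $S \ni k$, $|S| \geq 2$, which the linear-regression hypothesis class has no way to fit and for which a richer architecture (addressed by the experiments in Section \ref{sec:exps}) is needed.
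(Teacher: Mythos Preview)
Your proposal is correct and follows essentially the same argument as the paper: identify the conserved quantity $w_0-w_k$ (the paper writes it as $W_k^{(0)}-b^{(0)}$), solve the decoupled linear recursions for the remaining coordinates and for the sum $w_0+w_k$, apply Parseval under the uniform test distribution, and then average over the initialization to obtain $\tfrac{1}{4}(\hat f(\emptyset)-\hat f(\{k\}))^2+\tfrac{1}{2}\sigma^2+O(e^{-ct})$. The only cosmetic difference is the learning-rate convention (the paper's loss lacks the $\tfrac12$ factor, so its rates are $(1-2\gamma)$ and $(1-4\gamma)$ where yours are $(1-\eta)$ and $(1-2\eta)$), which is immaterial.
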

\begin{remark}[Kernel regression]
Our result would still hold
if, instead of linear regression, we performed kernel regression with a kernel $K$ that is invertible under the training distribution, i.e., such that $\E_{X,X' \sim \cU_{-k}} [ K(X,X')]$ is invertible. Furthermore, one could try to remove the invertibility assumption by adding a regularization term.
\end{remark}
\begin{remark}[Staircase learning]
An attempt to extend the above result to non-linear function consists of considering staircase functions \cite{abbe2021staircase,mergedstaircase}, and extending \cite{mergedstaircase} to a setting with bias in order to show that SGD learns the lowest-degree representation under proper mean-field initialization of depth-2 neural networks.  
\end{remark}

The proof of Theorem~\ref{thm:can_dist_shift} is presented in Appendix~\ref{sec:proof_mismatch}. In the rest of this section, we empirically show that the condition of zero bias stated in Theorem~\ref{thm:can_dist_shift} is no longer necessary if linear neural networks of large enough depth or small enough initialization are used. 
% \subsection{Experiments on linear neural networks} \label{sec:linear-models-experiments}
In fact, the generalization error of linear neural networks makes a transition from the value proved in Theorem~\ref{thm:can_dist_shift} to the Boolean influence as deeper models or smaller scales of initialization are used. We take
$f(x_1, x_2, \ldots, x_{11}) = 1+2x_1-3x_2 +4x_3 -\cdots-11x_{10} +12x_{11}$ as the function that we want to learn. We consider linear neural networks with hidden layers of size 256. Figure~\ref{fig:linear} (left) shows the effect of depth: we initialize weights and biases independently using the uniform distribution $\mathcal{U}(-\frac{1}{\sqrt{N_{in}}} , \frac{1}{\sqrt{N_{in}}})$
where $N_{in}$ is the input dimension of the respective layer. We plot the generalization error in the holdout setting with respect to the corresponding frozen coordinate at training, together with the value of the Boolean influence of each coordinate. We observe that with the increase of depth, the generalization error tends to the Boolean influence. Figure~\ref{fig:linear} (right) show the role of initialization: we take a linear neural network with 3 layers and we initialize weights and biases using the uniform distribution $\mathcal{U}(-N_{in} ^ {-\alpha} , N_{in} ^ {-\alpha})$ with $\alpha$ taking value in $\{0.5, 1, 1.5, 2, 2.5\}$. 
% The results are shown in Figure \ref{fig:linear}. 
It can be seen that as the initialization scale decreases, the generalization error tends to the Boolean influence.  
\begin{figure}[t]
\vspace{-1em}
     \centering
     \begin{subfigure}[b]{0.48\textwidth}
         \centering
         \includegraphics[width=\textwidth]{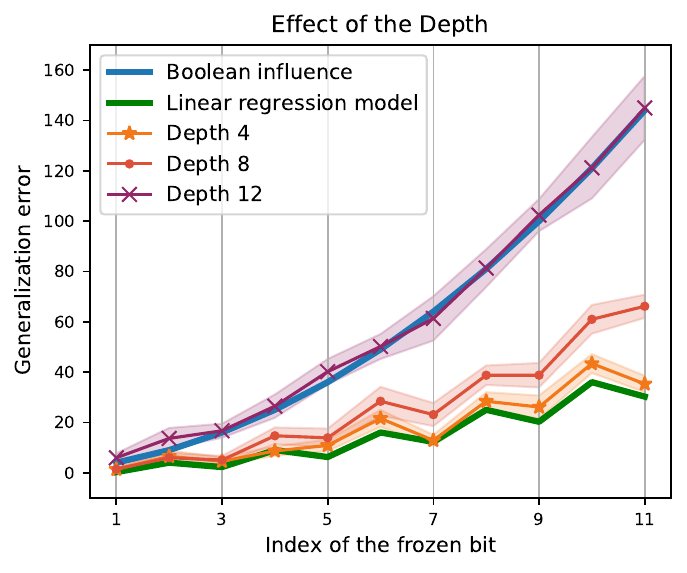}
        %  \caption{Effect of depth. Neural networks have fixed width of 256 and are initialized using $\mathcal{U}(-\frac{1}{\sqrt{N_{in}}} , \frac{1}{\sqrt{N_{in}}})$.}
        %  \label{fig:linear-depth}
     \end{subfigure}
     \hfill
     \begin{subfigure}[b]{0.48\textwidth}
         \centering
         \includegraphics[width=\textwidth]{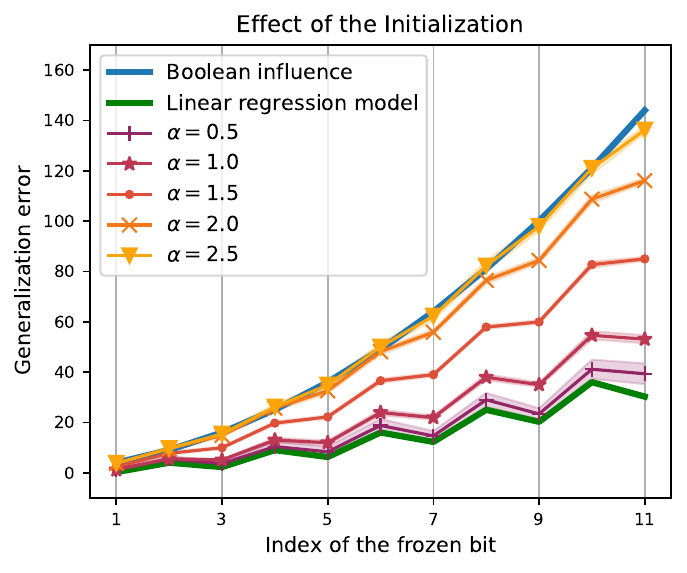}
        %  \caption{Effect of initialization scale. Neural networks have fixed width of 256 and depth of 3.}
        %  \label{fig:linear-init}
     \end{subfigure}
     \hfill
        \caption{Effect of depth (left) and initialization scale (right) on the out-of-distribution generalization error of linear neural networks. Generalization error tends to the Boolean influence as depth increases (left) or as the initialization scale decreases (right). 
        % x-axis and y-axis represent the frozen index and the generalization error respectively. Error of linear regression model refers to $\frac 14 (\hat f(\emptyset) - \hat f(\{k\}))^2$ expression that is derived in Theorem \ref{thm:can_dist_shift}.
        }
        \label{fig:linear}
        \vspace{-1em}
\end{figure}

\section{Conclusion}
This paper first establishes a formal result that supports a conjecture made in~\cite{Zhang2021PointerVR}, relating the noise sensitivity of a target function to the generalization error. This gives a first connection between a central measure in Boolean analysis, the noise-sensitivity, and the generalization error when learning Boolean functions with GD. The paper then investigates the generalization error under the canonical holdout. The `low-degree implicit bias hypothesis' is put forward and supported both experimentally and theoretically for certain architectures. This gives a new insight on the implicit bias of GD when training neural networks, that is specific to Boolean functions such as the Boolean PVR, and that relates to the fact that certain networks tend to greedily learn monomials by incrementing their degree. In particular, this allows to characterize the generalization error in terms of the Boolean influence, a second central notion in Boolean Fourier analysis. Boolean measures thus seem to have a role to play in understanding generalization when  learning of `reasoning' or `logical' functions. There are now many directions to pursue such as: (1) extending the realm of architectures/models for which we can prove formally the Boolean influence tightness, (2) considering more general holdout or distribution shift models, (3) investigate how the picture changes when the bits/digits are given by MNIST images, (4) better understanding when the low-degree implicit bias is taking place or not, within and beyond PVR, since the Boolean influence is not always tight in our experiments (e.g., MLP-Mixers seem to have a worse performance than the Boolean influence on PVR; see also Appendix \ref{appendix:experiments} for other functions), (5) investigating how to `revert' the implicit bias towards low-degree when it is taking place, to compensate for the unseen data; this will require justifying and engineering why certain symmetries are favorable in the learned function.

\begin{ack}
We thank Raphaël Berthier (EPFL) and Jan Hązła (EPFL) for useful discussions.
\end{ack}

\bibliography{references}
\bibliographystyle{alpha}
% %%%%%%%%%%%%%%%%%%%%%%%%%%%%%%%%%%%%%%%%%%%%%%%%%%%%%%%%%%%%
\section*{Checklist}

%%% BEGIN INSTRUCTIONS %%%
% The checklist follows the references.  Please
% read the checklist guidelines carefully for information on how to answer these
% questions.  For each question, change the default \answerTODO{} to \answerYes{},
% \answerNo{}, or \answerNA{}.  You are strongly encouraged to include a {\bf
% justification to your answer}, either by referencing the appropriate section of
% your paper or providing a brief inline description.  For example:
% \begin{itemize}
%   \item Did you include the license to the code and datasets? \answerYes{See Section~\ref{gen_inst}.}
%   \item Did you include the license to the code and datasets? \answerNo{The code and the data are proprietary.}
%   \item Did you include the license to the code and datasets? \answerNA{}
% \end{itemize}
% Please do not modify the questions and only use the provided macros for your
% answers.  Note that the Checklist section does not count towards the page
% limit.  In your paper, please delete this instructions block and only keep the
% Checklist section heading above along with the questions/answers below.
%%% END INSTRUCTIONS %%%

\begin{enumerate}

\item For all authors...
\begin{enumerate}
  \item Do the main claims made in the abstract and introduction accurately reflect the paper's contributions and scope?
    \answerYes{}
  \item Did you describe the limitations of your work?
    \answerYes{}
  \item Did you discuss any potential negative societal impacts of your work?
    \answerNA{We believe there are no potential negative societal impacts.}
  \item Have you read the ethics review guidelines and ensured that your paper conforms to them?
    \answerYes{}
\end{enumerate}

\item If you are including theoretical results...
\begin{enumerate}
  \item Did you state the full set of assumptions of all theoretical results?
    \answerYes{}
        \item Did you include complete proofs of all theoretical results?
    \answerYes{}
\end{enumerate}

\item If you ran experiments...
\begin{enumerate}
  \item Did you include the code, data, and instructions needed to reproduce the main experimental results (either in the supplemental material or as a URL)?
    \answerYes{Link to the GitHub repository is provided.}
  \item Did you specify all the training details (e.g., data splits, hyperparameters, how they were chosen)?
    \answerYes{See Section~\ref{sec:exps} and Appendix on experiment details.}
        \item Did you report error bars (e.g., with respect to the random seed after running experiments multiple times)?
    \answerYes{The $95\%$ confidence interval is reported if relevant.}
        \item Did you include the total amount of compute and the type of resources used (e.g., type of GPUs, internal cluster, or cloud provider)?
    \answerYes{See Appendix on experiment details}
\end{enumerate}

\item If you are using existing assets (e.g., code, data, models) or curating/releasing new assets...
\begin{enumerate}
  \item If your work uses existing assets, did you cite the creators?
    \answerYes{}
  \item Did you mention the license of the assets?
    \answerYes{}
  \item Did you include any new assets either in the supplemental material or as a URL?
    \answerYes{}
  \item Did you discuss whether and how consent was obtained from people whose data you're using/curating?
    \answerNA{}
  \item Did you discuss whether the data you are using/curating contains personally identifiable information or offensive content?
    \answerNA{}
\end{enumerate}

\item If you used crowdsourcing or conducted research with human subjects...
\begin{enumerate}
  \item Did you include the full text of instructions given to participants and screenshots, if applicable?
    \answerNA{}
  \item Did you describe any potential participant risks, with links to Institutional Review Board (IRB) approvals, if applicable?
    \answerNA{}
  \item Did you include the estimated hourly wage paid to participants and the total amount spent on participant compensation?
    \answerNA{}
\end{enumerate}

\end{enumerate}

%%%%%%%%%%%%%%%%%%%%%%%%%%%%%%%%%%%%%%%%%%%%%%%%%%%%%%%%%%%%

% \newpage \skip \newpage \skip

\newpage
\appendix

\section{Proof of Theorem~\ref{thm:distr_match}} \label{sec:proof_distr_match}
The proof of Theorem~\ref{thm:distr_match} goes by two steps. As a first step, we connect the noise stability to another measure of complexity for classes of functions called cross predictability (CP). As a second step, we use the negative result from~\cite{AS20}, that lower bounds the generalization error of learning a class of functions in terms of its cross predictability. 

\subsection{From noise stability to cross predictability}
We redefine here the cross predictability (CP), for completeness.
\begin{defin}[Cross Predictability~\cite{AS20}] \label{def:CP}
Let $\cX$ be the input space and let $\cF$ be a class of functions. Let $P_{\cX}$ and $P_{\cF}$ be two distributions supported on $\cX$ and $\cF$ respectively. Their cross-predictability is defined as
\begin{align}
\CP(P_\cF,P_\cX) = \E_{F,F'\sim P_{\cF} } [\E_{X \sim P_{\cX}} [ F(X) \cdot  F'(X) ]^2 ].
\end{align}
\end{defin}
\noindent
Before diving into the proof we give few definitions that will be useful. 
Given a target function $f$, we define the ``orbit'' of $f$ ($\orb(f)$) as the class of all functions generated by composing $f$ with a permutation of the input space: 
\begin{defin}[Orbit]
For $f : \bR^n \to \bR$ and a permutation $\pi \in S_n$, we let $(f \circ \pi)(x) = f(x_{\pi(1)},...,x_{\pi(n)})$. Then, the \emph{orbit} of $f$ is defined as
\begin{align}
\orb(f) : = \{ f \circ \pi : \pi \in S_n  \} . 
\end{align}
\end{defin}
GD (or SGD) on a neural network with initialization that is target agnostic has equivalent behaviour when learning any target function in $\orb(f)$. We believe that one could extend the result to other invariances, beyond permutations.

Recall from Definition~\ref{def:Nextension} that we introduced an augmented input space, to guarantee that the high-degree Fourier coefficients of the target function are sparse enough. Thus, let 
% Critical functions for which the orbit has low cardinality are the extremal functions, i.e. functions that have terms of degree $n-O(1)$, and the dense, or random, functions, i.e. functions that have 
%[Example parity]
$\bar  f:\{\pm 1\}^{2n} \to \{\pm 1\}  $ be the $2n-$extension of $f$, defined as 
$\bar{f} (x_1,...,x_n,x_{n+1},...,x_{2n}) = f(x_1,...,x_n).$
For brevity we make use of the following notation:
\begin{align}
    \CP(\orb(\bar f)) := \CP(\cU_{\orb(\bar f)},\cU_{\bF_{2}^{2n}}),
\end{align}
where $\cU_{\orb(\bar f)},\cU_{\bF_{2}^{2n}} $ denote the uniform distribution over $\orb(\bar f) $ and over $ \bF_{2}^{2n}$ (i.e., the $2n$-dimensional Boolean hypercube), respectively.

Furthermore, recall that every Boolean function $f$ can be written in terms of its Fourier-Walsh expansion $ f(x) = \sum_{S} \hat f(S)  \chi_S(x)$, where $\chi_S(x) = \prod_{i \in S} x_i$ are the standard Fourier basis elements and $\hat f(S) $ are the Fourier coefficients of $f$. We further denote by
\begin{align}
    W^{k} (f)  = \sum_{S: |S| = k} \hat f(S)^2 \quad \text{ and } \quad 
    W^{\leq k} (f)  = \sum_{S: |S| \leq k} \hat f(S)^2,
\end{align}
the total weight of the Fourier coefficients of $f$ at degree $k$ and up to degree $k$, respectively.\\
Let $\hat f$ be the Fourier coefficients of the original function $f$, and let $\hat h$ be the coefficients of the augmented function $\bar f$, that are:
\begin{align} 
    \hat  h(T) &= \hat f(T)  
    & \hspace{-3cm}\text{ if } T \subseteq [n] \label{eq:fourier_fbar_1}\\
    \hat h(T) & = 0  & \hspace{-3cm}\text{ otherwise}. \label{eq:fourier_fbar_2}
\end{align}
% We prove that if $\CP(P_{\orb(\bar F)}, P_{\cX}) = n^{-O(1)}$, then there must exist a $k< \infty$ such that $G^{\leq k} (F) = n^{-O(1)}$.\\
% We use the following lower bound from~\cite{AS20}.
% \begin{prop}[\cite{AS20}] \label{prop:AS20}
% \begin{align}
%     {\rm gen} (\bar f,  f_{NN}^{(t)}) \geq 1/2-C \cdot t \cdot E \cdot \CP(\cU_{\orb(\bar f)},\cU^n),
% \end{align}
% where $\orb(\bar f) $ is the orbit of the augmented function $\bar f$, $\cU_{\orb(\bar f)}$ is the uniform distribution over $\orb(\bar f)$ and $\cU^n$ is the uniform distribution over the $n$-dimensional hypercube.
% \end{prop}
\noindent
We make use of the following Lemma, that relates the cross-predictability of $\orb(\bar f)$ to the Stability of $f$.
\begin{lemma} \label{lem:CP_Stab}
There exists $\delta$ such that for any $\delta' < \delta $
\begin{align}
    \CP(\orb(\bar f)) \leq  \Stab_{\delta'} ( f).
\end{align}
\end{lemma}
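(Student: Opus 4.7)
The plan is to unfold the cross-predictability using the Fourier-Walsh basis, apply Cauchy-Schwarz in the square to separate $\hat f(T)^2$ from the orbit's indicator, and compare the resulting per-level combinatorial weight to the exponentially decaying weight that defines the noise stability. The input doubling will be doing all the work: a random $\pi\in S_{2n}$ sends a fixed $k$-set $T\subseteq[n]$ to a random $k$-set of $[2n]$, so $\Pr[\pi(T)\subseteq[n]]$ decays like $2^{-k}$, which is exactly the rate corresponding to $\delta=1/4$.

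First, I would observe that by the invariance of the uniform law on $S_{2n}$ under $(\pi,\pi')\mapsto(\pi,\pi'\pi^{-1})$,
$$\CP(\orb(\bar f)) \;=\; \E_{\pi\sim_U S_{2n}}\big[\,\E_{X\sim_U \bF_2^{2n}}[\bar f(X)\,\bar f(\pi X)]^2\,\big].$$
Expanding $\bar f$ in Fourier-Walsh, using \eqref{eq:fourier_fbar_1}--\eqref{eq:fourier_fbar_2}, and collapsing cross terms via $\E[\chi_S\chi_{S'}]=\mathbf{1}[S=S']$ reduces the inner expectation to
$$A(\pi) \;:=\; \sum_{T\subseteq [n]} \hat f(T)\,\hat h(\pi(T)) \;=\; \sum_{T\subseteq [n]} \hat f(T)\,\hat f(\pi(T))\,\mathbf{1}[\pi(T)\subseteq [n]].$$

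Second, I would apply Cauchy-Schwarz with $a_T=\hat f(T)$ and $b_T=\hat f(\pi(T))\mathbf{1}[\pi(T)\subseteq [n]]$, using Parseval $\sum_{T\subseteq [n]}\hat f(T)^2=\|f\|_2^2=1$ (since $f$ is $\pm1$-valued), to get
$$A(\pi)^2 \;\le\; \sum_{T\subseteq [n]} \hat f(\pi(T))^2\,\mathbf{1}[\pi(T)\subseteq [n]].$$
Taking expectations and interchanging, for fixed $T\subseteq [n]$ with $|T|=k$, the image $\pi(T)$ is uniform over $k$-subsets of $[2n]$, so
$$\E_\pi\big[\hat f(\pi(T))^2\,\mathbf{1}[\pi(T)\subseteq [n]]\big] \;=\; \frac{1}{\binom{2n}{k}}\sum_{S\in\binom{[n]}{k}}\hat f(S)^2 \;=\; \frac{W^k(f)}{\binom{2n}{k}}.$$
Summing over $T\subseteq[n]$ and grouping by $|T|$ yields
$$\CP(\orb(\bar f)) \;\le\; \sum_{k=0}^{n} \binom{n}{k}\,\frac{W^k(f)}{\binom{2n}{k}} \;=\; \sum_{k=0}^{n}\!\Big(\prod_{i=0}^{k-1}\tfrac{n-i}{2n-i}\Big)\,W^k(f) \;\le\; \sum_{k=0}^{n} 2^{-k}\,W^k(f),$$
where the last step uses the elementary inequality $(n-i)/(2n-i)\le 1/2$.

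Third, I would compare to $\Stab_{\delta'}[f]=\sum_k (1-2\delta')^k W^k(f)$. For any $\delta'\le 1/4$, $(1-2\delta')^k\ge 2^{-k}$ term by term, so $\Stab_{\delta'}[f]\ge \sum_k 2^{-k}W^k(f) \ge \CP(\orb(\bar f))$. This proves the lemma with $\delta=1/4$. I do not see a substantive obstacle: the argument is a single Cauchy-Schwarz plus a combinatorial counting, and the key structural input is the doubling which makes the level weights align with the noise stability at threshold $\delta=1/4$. The only minor care is to handle the $k=0$ (constant) contribution, where the Cauchy-Schwarz bound and $\Stab_{\delta'}[f]$ both contribute the same $W^0(f)=\hat f(\emptyset)^2$.
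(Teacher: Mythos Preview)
Your proof is correct and follows essentially the same route as the paper: Fourier expansion of the inner product, Cauchy--Schwarz, the combinatorial estimate $\binom{n}{k}/\binom{2n}{k}\le 2^{-k}$ from the input doubling, and comparison with $\Stab_{\delta'}[f]=\sum_k(1-2\delta')^kW^k[f]$ to conclude with $\delta=1/4$. The only cosmetic difference is that you put the indicator on the $\hat f(\pi(T))$ factor in Cauchy--Schwarz while the paper puts it on the $\hat f(T)$ factor; after averaging over $\pi$ both choices give the identical bound $\sum_k \frac{\binom{n}{k}}{\binom{2n}{k}}W^k[f]$, and your handling of the ratio $\prod_{i=0}^{k-1}\frac{n-i}{2n-i}\le 2^{-k}$ is in fact slightly cleaner than the paper's ``$\sim 1/2^k$''.
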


\begin{remark}[Noise Stability]
We remark the following two properties of $\Stab_\delta[f]$:
\begin{enumerate}
    \item One can show (see e.g., Theorem 2.49 in~\cite{o'donnell_2014}) that 
\begin{align}
    \Stab_\delta(f) = \sum_{k=1}^n (1-2\delta)^k W^{k}[f],
\end{align}
where $ W^{k}[f]$ is the Boolean weight at degree $k$ of $f$;
    \item For all $\delta \in [0, 1/2]$, $\Stab_{\delta} ( f) = \Stab_{\delta } (\bar f) $. This follows directly from the previous point and~\eqref{eq:fourier_fbar_1}-\eqref{eq:fourier_fbar_2}.
\end{enumerate}

\end{remark}
\noindent
\begin{proof}[Proof of Lemma~\ref{lem:CP_Stab}]
% Note that for all $\delta $, $\Stab_{\delta} ( f) = \Stab_{\delta } (\bar f) $. Moreover, one can show (see e.g. Theorem 2.49 in~\cite{o'donnell_2014}) that 
% \begin{align}
%     \Stab_\delta(f) = \sum_{k=1}^n (1-2\delta)^k W^{k}[f],
% \end{align}
% where $ W^{k}[f]$ is the Boolean weight at degree $k$ of $f$, defined before.
We denote by $\pi$ a random permutation of $2n$ elements. We can bound the $\CP(\orb(\bar f))$ by the following:
\begin{align}
    \CP(\orb(\bar f)) & = \E_{\pi}\left[ \E_X \left[ \bar f(X) \bar f(\pi(X))\right]^2\right]\\
    & = \E_{\pi}\left( \sum_{T\subseteq [2n]} \hat h(T) \hat h  (\pi(T)) \right)^2 \label{eq:parseval} \\
    & = \E_{\pi}\left( \sum_{T \subseteq [n]} \hat f(T) \hat f (\pi(T))  \cdot \mathds{1}\left(\pi(T) \subseteq [n]\right) \right)^2  \label{eq:coefficients}\\
    & \overset{C.S}{\leq} \E_{\pi} \left(\sum_{S\subseteq[n]} \hat f(\pi(S))^2  \right) \cdot \left( \sum_{T\subseteq[n]} \hat f(T)^2 \mathds{1}\left(\pi(T) \subseteq [n]\right) \right) \label{eq:cs1}\\
    & = \sum_{T \subseteq [n]} \hat f(T)^2 \cdot  \pr_\pi \left(\pi(T) \subseteq [n]\right) \label{eq:Bool_sum}\\
    &= \sum_{k=1}^n W^k[f] \cdot \pr_\pi \left(\pi(T) \subseteq [n] \mid |T| = k\right), \label{eq:WkProb}
\end{align}
where~\eqref{eq:parseval} is the scalar product in the Fourier basis,~\eqref{eq:coefficients} follows by applying the formulas of the $\hat h$ given in \eqref{eq:fourier_fbar_1}-\eqref{eq:fourier_fbar_2},~\eqref{eq:cs1} holds by Cauchy-Schwarz inequality,~\eqref{eq:Bool_sum} holds since $f $ is Boolean-valued and for each $\pi$ by Parseval identity, $\sum_{S\subseteq[n]} \hat f(\pi(S))^2  = \E_X[f(X)^2] =1 $, and~\eqref{eq:WkProb} holds since the second term is invariant for all sets of a given cardinality.
\\
Recalling $\pi$ is a random permutation over the augmented input space of dimension $2n$, for each $k \in [n]$ we can further bound the second term by 
\begin{align}
    \pr_\pi \left(\pi(T) \subseteq [n] \mid |T| = k \right) & = \frac{{n \choose k}}{{2n \choose k}} \sim \frac{1}{2^k} \leq (1-2\delta')^k, \quad \text{ for all $\delta'\leq 1/4$.}
\end{align}
 Thus, for all $\delta'\leq 1/4$,
\begin{align}
    \CP(\orb(\bar f)) & \leq \sum_{k=1}^n  (1-2\delta')^k W^k[f] = \Stab_{\delta'}[f].
\end{align}
\end{proof}
\begin{remark}
Note that the value of $\delta$ in Lemma~\ref{lem:CP_Stab} depends on the size of the input extension that we use. In this paper, we defined an input extension of size $2n$ (input doubling), which gives $\delta = 1/4$, however we could have chosen e.g. a $3n$-extension and obtain $\delta = 1/3$, and so on.
\end{remark}

\subsection{From cross predictability to hardness of learning}
For the second step, we make use of Theorem 3 and Corollary 1 in~\cite{AS20}, that prove a lower bound of learning a class of function in terms of its cross predictability. 
The lower bound holds for the noisy GD algorithm (\cite{AS20,abbe2021power}), of which we give a formal definition here.
\begin{defin}[Noisy GD with batches] \label{def:noisyGD}
Consider a neural network of size $E$, with a differentiable non-linearity and initialization of the weights $W^{(0)}$. Given a differentiable loss function, the updates of the noisy GD algorithm with learning rate $\gamma_t$ and gradient precision $A$ are defined by
\begin{align}
    W^{(t)} = W^{(t-1)}- \gamma_t \E_{X \sim S^{(t)}} [ \nabla L(f(X),f_{NN}^{(t)}) ]_A + Z^{(t)}, \qquad t = 1,...,T,
\end{align}
where for all $t$, $Z^{(t)}$ are i.i.d. $\cN(0, \sigma^2)$, for some $\sigma$, and they are independent from other variables, $S^{(t)}=(X_1^{(t)},...,X_m^{(t)})$ has independent components drawn from the input distribution $P_\cX$ and independent from other time steps, and $f$ is the target function, from which the labels are generated, and by $[.]_A$ we mean that whenever the argument is exceeding $A$ (resp. $-A$) it is rounded to $A$ (resp. $-A$).
\end{defin}
Theorem 3 and Corollary 1 in~\cite{AS20} imply that for any distribution over the Boolean hypercube $P_\cX$ and Boolean functions $P_\cF$, it holds that
\begin{align}
     \pr_{X,F\sim P_\cF, f_{NN}^{(T)}}(F(X) \neq f_{NN}^{(T)}(X)) \geq 1/2 - \frac{\gamma T \sqrt{E} A }{\sigma} \left( 1/m + \CP(P_\cF,P_\cX) \right)^{1/4},
\end{align}
where $\gamma,E, A, \sigma, m$ have the same meaning as in Definition~\ref{def:noisyGD}. As observed by them, in our case since the initialization is invariant under permutations of the input, then learning the orbit of $\bar f$ under uniform distribution is equivalent to learning $\bar f$, thus the following bound holds:
\begin{align}
     \pr_{X, f_{NN}^{(T)}}(\bar f(X) \neq f_{NN}^{(T)}(X)) \geq 1/2 - \frac{\gamma T \sqrt{E} A }{\sigma} \left( 1/m + \CP(\orb(\bar f)) \right)^{1/4}.
\end{align}

% \section{Further details on Theorem~\ref{thm:distr_match}} \label{sec:no_input_augmentation}

% \subsection{Noise sensitivity and Initial Alignment}
% In this Section we compare our result with the 

\section{Removing the input doubling} \label{sec:no_input_augmentation}
One can prove a similar result to the one of Theorem~\ref{thm:distr_match}, without using the input extension technique. However, we need some additional assumptions on $f$. To introduce them, let us first fix some notation.
In the following, we say that a sequence $a_n$ is
\emph{noticeable} if there exists $c\in\mathbb{N}$ such that $a_n=\Omega(n^{-c})$.
On the other hand, we say that $f$ is \emph{negligible} if $\lim_{n\to\infty} n^c a_n=0$ for
every $c\in\mathbb{N}$ 
(%or analogously $f$ is \emph{negligible} if 
which we also write $a_n = n^{-\omega(1)}$).
% We say that a sequence $a_n$ is $n^{-\omega(1)}$ (or `negligible') if for any constant $r>0$, there exists $R,N$ such that $a_n < R n^{-r}$ for all $n>N$, i.e. if it approaches zero faster than any polynomially decreasing sequence. 
% We denote by $W^{\geq n-O(1)}[f] $ the Fourier weight of $f$ at degree at least $n-C$, for any constant $C$.
\begin{assumption}[Non-dense and non-extremal function] \label{ass:no_doubling}\text{}
\begin{itemize}
\item[a)] We say that $f$ is ``non-dense'' if there exists $c$ such that $W\{ T: \hat f(T)^2 \leq n^{-c} \} = n^{-\omega(1)}$, i.e., the negligible Fourier coefficients do not bring a noticeable contribution if taken all together;
\item[b)] We say $f$ is ``non-extremal'' if for any positive constant $D$, $W^{\geq n-D}[f] = n^{-\omega(1)}$, i.e., $f$ does not have noticeable Fourier weight on terms of degree $n-O(1)$.
\end{itemize}
\end{assumption}
\noindent
With such additional assumptions, we can conclude the following.
\begin{prop} \label{prop:no_doubling}
Let $ f:\{\pm 1\}^n \to \{ \pm 1\}$ be a balanced target function, let $\Stab_\delta(f)  $ be its noise stability and let $f^{(t)}_{NN}$ be the output of $GD$ with gradient precision $A$ after $t$ time steps, trained on a neural network of size $E$ with initialization that is target agnostic. Assume $f$ satisfies Assumption~\ref{ass:no_doubling}. Then, there exist $c,C>0$ and $D>0$ such that if $\delta < D/n$
% \footnote{Exploiting the non-extremality assumption more carefully, we can obtain the bound for all $\delta \leq  \frac{1}{2}-\left(\frac{1}{2}\right)^{\frac{n}{n-C}}$, for any $C>0$.},
\begin{align}
    {\rm gen}(f, f_{NN}^{(t)}) \geq 1/2-C \cdot t\cdot \sqrt{E} \cdot \left(  n^c \cdot \Stab_\delta(f) + n^{-\omega(1)} \right)^{1/4}.
\end{align}
\end{prop}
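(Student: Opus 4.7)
The plan is to mirror the proof architecture of Theorem~\ref{thm:distr_match}: the AS20 lower bound on the generalization error in terms of cross-predictability (the second step in Appendix~\ref{sec:proof_distr_match}) goes through unchanged, so all I need is a replacement for Lemma~\ref{lem:CP_Stab} that bounds $\CP(\orb(f))$ directly (without input doubling) in terms of $\Stab_\delta(f)$. Concretely, I aim to establish $\CP(\orb(f)) \leq n^c\cdot \Stab_\delta(f) + n^{-\omega(1)}$, after which Theorem 3 / Corollary 1 of \cite{AS20} produces the claimed generalization lower bound with the $n^c$ factor appearing naturally inside the fourth root.

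Write $\langle f, f\circ\pi\rangle = \sum_T \hat f(T)\hat f(\pi(T))$ and split the Fourier support into a ``heavy'' set $\mathcal{H}=\{T:\hat f(T)^2>n^{-c}\}$ (with $|\mathcal{H}|\le n^c$ by Parseval) and a ``light'' set $\mathcal{L}$, for which Assumption~\ref{ass:no_doubling}(a) gives $W(\mathcal{L})=n^{-\omega(1)}$. This decomposes $\langle f,f\circ\pi\rangle^2$ into four cross-terms $\CP_{HH},\CP_{HL},\CP_{LH},\CP_{LL}$. Each of the three cross-terms involving at least one light factor is controlled by a direct Cauchy-Schwarz together with the identity $\mathbb{E}_\pi[\hat f(\pi(T))^2\mathds{1}(\pi(T)\in\mathcal{L})]\le W(\mathcal{L})/\binom{n}{|T|}$, giving a contribution of order $W(\mathcal{L})\cdot|\mathcal{H}| \le n^{-\omega(1)}\cdot n^c = n^{-\omega(1)}$.

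For the main term $\CP_{HH}$, I use Cauchy-Schwarz in the form $(\sum_{T\in\mathcal{H}} a_T)^2 \le |\mathcal{H}|\sum_{T\in\mathcal{H}} a_T^2$ (exploiting the fact that there are at most $n^c$ non-zero summands), then take the expectation to obtain $\mathbb{E}_\pi[\CP_{HH}] \le n^c\sum_k (W^k[f])^2/\binom{n}{k}$. I then split this sum at $k = n-D$: the tail $k>n-D$ is bounded using Assumption~\ref{ass:no_doubling}(b) and is negligible, while on the bulk range $1\le k\le n-D$ I apply the elementary combinatorial inequality $1/\binom{n}{k} \le (1-2\delta)^k$, valid for $\delta<D/n$ and $n$ large, yielding $\sum_{k\le n-D} (W^k[f])^2/\binom{n}{k} \le \sum_k W^k[f](1-2\delta)^k = \Stab_\delta(f)$. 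Combining the four pieces gives the desired CP bound, and the proposition follows.

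The main obstacle is establishing and then exploiting the inequality $\binom{n}{k}^{-1}\le (1-2\delta)^k$ uniformly over the bulk range. At small $k$ it is essentially trivial because $\binom{n}{k}\ge n$ dwarfs $(1-2\delta)^{-k}\approx 1$; but at the upper end $k\to n-D$ one needs $\binom{n}{D}\gtrsim e^{2D}$, which forces $D$ to be chosen as an appropriate absolute constant and simultaneously makes it clear why one cannot tolerate noticeable Fourier mass at degrees $\ge n-D$---this is precisely the role of non-extremality. The non-density assumption plays the analogous role for the light cross-terms: without it, $|\mathcal{H}|\le n^c$ would fail and the Cauchy-Schwarz step used on $\CP_{HH}$ would lose too much.
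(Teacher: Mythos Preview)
Your proposal is correct and follows essentially the same approach as the paper: both reduce to a CP-vs.-Stability lemma via a heavy/light Fourier split, Cauchy--Schwarz, the count $|\mathcal{H}|\le n^c$, a degree cutoff at $n-D$ handled by non-extremality, and the combinatorial bound $\binom{n}{k}^{-1}\le (1-2\delta)^k$ on the bulk (the paper obtains this via $\binom{n}{k}\ge (n/k)^k$ and $\delta\le D/(2n)$). The only cosmetic difference is that the paper splits into two terms according to whether $\pi(T)$ is heavy or light, whereas you split into four terms on both $T$ and $\pi(T)$; both routes yield $\CP(\orb(f))\le O(n^c)\Stab_\delta(f)+n^{-\omega(1)}$ and then invoke the AS20 bound identically.
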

\noindent
The proof of Proposition~\ref{prop:no_doubling} resembles the proof of Theorem~\ref{thm:distr_match}. The only modification required is in Lemma~\ref{lem:CP_Stab}, which is replaced by the following Lemma.

\begin{lemma} \label{lem:CP_stab_no_doubling}
Let $f$ be a Boolean function that satisfies Assumption~\ref{ass:no_doubling}. There exists $c,D>0$ such that for $\delta<D/n$,
\begin{align}
    \CP(\orb(f)) \leq 2 \cdot n^c \cdot \Stab_\delta(f) + n^{-\omega(1)}.
\end{align}
\end{lemma}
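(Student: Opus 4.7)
The plan is to mimic the proof of Lemma~\ref{lem:CP_Stab}, with the two structural assumptions on $f$ compensating for the absence of input doubling: non-denseness will let me discard the long tail of tiny Fourier coefficients, while non-extremality will handle the extreme degree $k=n$ where $\binom{n}{k}$ degenerates to $1$.

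First I will split $f = f_L + f_S$ where $f_L := \sum_{T \in \mathcal{L}} \hat{f}(T)\chi_T$ and $\mathcal{L} := \{T : \hat{f}(T)^2 > n^{-c}\}$, with $c$ the constant supplied by non-denseness. Then $\|f_S\|_2^2 = n^{-\omega(1)}$, and since each element of $\mathcal{L}$ consumes at least $n^{-c}$ of the total Fourier weight $\|f\|_2^2 = 1$, we get $|\mathcal{L}_k| := |\mathcal{L} \cap \binom{[n]}{k}| \leq n^c$ for every $k$. Expanding $\langle f, f\circ\pi\rangle$ linearly along this decomposition and applying $(a+b+c+d)^2 \leq 4\sum a_i^2$, the three inner products involving at least one copy of $f_S$ are controlled via Cauchy--Schwarz by $\|f_L\|_2^2 \|f_S\|_2^2$ or $\|f_S\|_2^4$, both $n^{-\omega(1)}$ uniformly in $\pi$. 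It therefore suffices to bound $\mathbb{E}_\pi[\langle f_L, f_L\circ\pi\rangle^2]$ by a constant multiple of $n^c \Stab_\delta(f)$ plus a negligible term.

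For that term, the same Cauchy--Schwarz$+$Parseval chain used in the proof of Lemma~\ref{lem:CP_Stab} yields
\begin{align*}
\mathbb{E}_\pi\!\left[\langle f_L, f_L\circ\pi\rangle^2\right] \;\leq\; \sum_{T \in \mathcal{L}} \hat{f}(T)^2 \,\Pr_\pi[\pi(T) \in \mathcal{L}] \;=\; \sum_{k=0}^n W^k[f_L] \cdot \frac{|\mathcal{L}_k|}{\binom{n}{k}},
\end{align*}
using that $\pi(T)$ is uniform over the $k$-subsets of $[n]$ when $|T|=k$. I will then split the sum by degree. For $k \in \{1, \ldots, n-1\}$, $\binom{n}{k} \geq n$ while $(1-2\delta)^{-k} \leq (1-2D/n)^{-n} \to e^{2D}$, so fixing $D$ as a small enough absolute constant (and $n$ large) gives $|\mathcal{L}_k|/\binom{n}{k} \leq n^c(1-2\delta)^k$, and summing against $W^k[f]$ reconstructs a quantity bounded by $n^c \Stab_\delta(f)$. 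For $k=0$, the corresponding term is just $W^0[f_L] \leq W^0[f] \leq \Stab_\delta(f)$. For $k=n$, non-extremality (applied for instance with constant $1$) gives $W^n[f] \leq W^{\geq n-1}[f] = n^{-\omega(1)}$, so the boundary term is negligible. Combining the three regimes with the cross-term bound concludes.

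The main obstacle I anticipate is the careful calibration at the edges of the degree range. The inequality $\binom{n}{k} \geq (1-2\delta)^{-k}$ must hold uniformly across the whole interior $k \in \{1, \ldots, n-1\}$, and this is exactly what forces $D$ to be a specific small absolute constant; at the same time the single extreme degree $k = n$ --- where $\binom{n}{n} = 1$ and an uncontrolled coefficient could on its own sustain a constant value of CP --- must actually vanish in the limit, which is the role that non-extremality plays, mirroring the way input doubling produced geometric decay in $|T|$ in the original proof.
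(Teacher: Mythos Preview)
Your proposal is correct and follows essentially the same strategy as the paper: isolate the large Fourier coefficients (you via the decomposition $f=f_L+f_S$, the paper via an indicator on $\hat f(\pi(T))$ inside the sum), apply Cauchy--Schwarz and Parseval to reduce the main term to $\sum_k W^k[f]\cdot n^c/\binom{n}{k}$, and then compare this to $\Stab_\delta(f)$ while invoking non-extremality for the top degrees. The only notable organizational difference is in the degree split: you handle $1\le k\le n-1$ uniformly via $\binom{n}{k}\ge n\ge (1-2\delta)^{-k}$ and excise only $k=n$, whereas the paper uses $\binom{n}{k}\ge (n/k)^k$ and removes a constant-width band $k>n-D$; both arguments are valid, and the constant $4$ you incur (versus the paper's $2$) is harmlessly absorbed into the exponent $c$.
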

\begin{proof}[Proof of Lemma~\ref{lem:CP_stab_no_doubling}]
Let $c>0$ be such that $W\{ T: \hat f(T)^2 \leq n^{-c} \} = n^{-\omega(1)}$. This $c$ exists because of Assumption 1a. 
\begin{align}
    \CP &(\orb(f)) =   \\
    &= \E_{\pi}\left[ \E_X \left[ f(X)  f(\pi(X))\right]^2\right]\\
    & = \E_{\pi}\left( \sum_{T\subseteq [n]} \hat f(T) \hat f  (\pi(T)) \right)^2 \\
    & = \E_{\pi}\left( \sum_{T \subseteq [n]} \hat f(T) \hat f (\pi(T))  \cdot \left( \mathds{1}\left(\hat f(\pi(T))^2 \leq n^{-c} \right) + \mathds{1}\left(\hat f(\pi(T))^2 > n^{-c} \right)  \right) \right)^2 \\
    & \leq 2 \E_{\pi}\left( \sum_{T \subseteq [n]} \hat f(T) \hat f (\pi(T)) \mathds{1}\left(\hat f(\pi(T))^2 \leq n^{-c} \right) \right)^2 +\\
    & \hspace{3cm}+  2 \E_{\pi} \left(\sum_{T \subseteq [n]} \hat f(T) \hat f (\pi(T)) \mathds{1}\left(\hat f(\pi(T))^2 > n^{-c} \right) 
    \right)^2 \label{eq:split_nc},
\end{align}
where in the last inequality we used $(a+b)^2 \leq 2(a^2+b^2)$.
Let us first focus on the second term on the right.
\begin{align}
    \E_{\pi}&\left( \sum_{T \subseteq [n]}\hat f(T) \hat f (\pi(T)) \mathds{1}\left(\hat f(\pi(T))^2 > n^{-c} \right) \right)^2 \\
    &\overset{C.S}{\leq} \E_{\pi} \left(\sum_{S\subseteq[n]} \hat f(\pi(S))^2  \right) \cdot \left( \sum_{T\subseteq[n]} \hat f(T)^2 \mathds{1}\left(\hat f(\pi(T))^2 > n^{-c}\right) \right)\\
    & \leq \sum_{T \subseteq [n]} \hat f(T)^2 \cdot  \pr_\pi \left(\hat f(\pi(T))^2 > n^{-c}\right)\\
    &= \sum_{k=1}^n W^k[f] \cdot \pr_\pi \left(\hat f(\pi(T))^2 > n^{-c} \mid |T| = k\right)\\
    & = \sum_{k=1}^{n-D} W^k[f] \cdot \pr_\pi \left(\hat f(\pi(T))^2 > n^{-c} \mid |T| = k\right) +\\
    & \hspace{3cm} + \sum_{k=n-D+1}^{n} W^k[f] \cdot \pr_\pi \left(\hat f(\pi(T))^2 > n^{-c} \mid |T| = k\right)\nonumber\\
    & \leq \sum_{k=1}^{n-D} W^k[f] \cdot \pr_\pi \left(\hat f(\pi(T))^2 > n^{-c} \mid |T| = k\right) + W^{\geq n-D+1}[f].
\end{align}
where $D$ is an arbitrary positive constant. Because of Assumption 1b, $W^{\geq n-D+1}[f] = n^{-\omega(1)} $. On the other hand, since $f$ is a Boolean valued function,
\begin{align}
    \sum_{T}\hat f(T)^2 = \E_X[f(X)^2] =1,
\end{align}
which implies that there are at most $n^c$ sets $T$ such that $ \hat f(T)^2 >n^{-c}$. Thus, recalling $\pi$ is a random permutation over the input space of dimension $n$, we get
\begin{align}
     \pr_\pi \left(\hat f(\pi(T))^2 > n^{-c} \mid |T| = k\right) & \leq \frac{n^c}{{n \choose k} } \\
     & \leq n^c \left( \frac{k}{n}  \right)^k \label{eq:binomial_bound}\\
     & \leq n^c \left( \frac{n-D}{n}\right)^k\\
     & \leq n^c \left( 1-2\delta\right)^k \qquad \text{ if   }\delta \leq \frac{D}{2n},
\end{align}
where in~\eqref{eq:binomial_bound} we used that ${n \choose k} \geq (\frac{n}{k})^k$ for all $k \geq 1$.
Going back to the first term in~\eqref{eq:split_nc} we get
\begin{align}
    &\E_{\pi}\left( \sum_{T \subseteq [n]} \hat f(T) \hat f (\pi(T)) \mathds{1}\left(\hat f(\pi(T))^2 \leq n^{-c} \right) \right)^2 \\
    & \overset{C.S}{\leq}  \E_{\pi} \left(\sum_{S\subseteq[n]} \hat f(S)^2  \right) \cdot \left( \sum_{T\subseteq[n]} \hat f(\pi(T))^2 \mathds{1}\left(\hat f(\pi(T))^2 > n^{-c}\right) \right)\\
    & \leq \sum_{T\subseteq[n]} \hat f(\pi(T))^2 \mathds{1}\left(\hat f(\pi(T))^2 > n^{-c}\right)\\
    & = n^{-\omega(1)},
\end{align}
by Assumption 1a. Hence overall,
\begin{align}
    \CP (\orb(f)) &\leq 2 n^c \sum_{k=1}^{n-D} W^{k}[f] (1-2\delta)^k +n^{-\omega(1)}\\
    &  \leq 2 n^c \Stab_{\delta}(f) +n^{-\omega(1)}.
\end{align}
\end{proof}

\section{Proof for Lemma \ref{lem:Boolean_inf} and Theorem \ref{thm:can_dist_shift}}\label{sec:proof_mismatch}
In this section, we present proofs for results mentioned in Section \ref{sec:mismatch_setting}, namely, Lemma~\ref{lem:Boolean_inf} and Theorem~\ref{thm:can_dist_shift}.

\subsection{Proof of Lemma \ref{lem:Boolean_inf}}
\begin{proof}[Proof of Lemma~\ref{lem:Boolean_inf}]
Let $f(x) = \sum_{T \subseteq [n]} \hat{f}(T)\chi_T(x)$ be the Fourier expansion of the function where $\chi_T(x) = \prod_{i \in T} x_i$. Therefore, the Fourier expansion of the frozen function will become
\begin{equation}
    f_{-k}(x) = \sum_{T \subseteq [n]\setminus k} (\hat{f}(T) + \hat{f}(T\cup k))\chi_T(x).
\end{equation} 
Thus, the difference between functions is equal to 
\begin{equation}
    (f-f_{-k})(x) = \sum_{T \subseteq [n]:k \in T} \hat{f}(T)\chi_T(x) - \sum_{T \subseteq [n]\setminus k} \hat{f}(T\cup k)\chi_T(x).
\end{equation} 
Hence, using Parseval's Theorem we have the following:
\begin{equation}
    \mathbb{E}_{U^n} (f-f_{-k})_2^2 = \sum_{T \subseteq [n]:k \in T} \hat{f}(T)^2 + \sum_{T \subseteq [n]\setminus k} \hat{f}(T\cup k)^2 = 2\sum_{T \subseteq [n]:k \in T} \hat{f}(T)^2.
\end{equation} 
Therefore,
 \begin{equation}
     \mathbb{E}_{U^n} \frac{1}{2} (f-f_{-k})_2^2 = \sum_{T \subseteq [n] : k \in T} \hat{f}(T)^2 = \mathrm{Inf}_{k}(f), 
 \end{equation}
 and the lemma is proved. 
\end{proof}
\subsection{Proof of Theorem \ref{thm:can_dist_shift}}
\begin{proof}[Proof of Theorem~\ref{thm:can_dist_shift}] Assume $\tilde{f}_{-k}^{(t)}(x,\Theta^{(t)}):= x^{T}W^{(t)}+b^{(t)}$ to be our linear model where $\Theta^{(t)} = (W^{(t)},b^{(t)})$ are the model parameters at time $t$. In the following, the super-script $t$ and $T$ denote the time-step and transpose respectively. Also, we use $\E_{x_{-k}}$ to denote the expectation of $x$ taken uniformly on the Boolean hypercube while $x_k = 1$.
Using the square loss, we have
\begin{align}
    L(\Theta^{(t)}, x,f) = (x^T W^{(t)} +b^{(t)} - f(x))^2,
\end{align}
and the gradients will be
\begin{align}
    \nabla_W L (\Theta^{(t)}, x,f) &= 2 x \left( x^T W^{(t)} +b^{(t)} - f(x)\right),\\
    \partial_b L (\Theta^{(t)}, x,f) &= 2 \left( x^T W^{(t)} +b^{(t)} - f(x)\right).
\end{align}
The GD update rule will then become
\begin{align}
    W^{(t+1)} &= W^{(t)} - 2 \gamma \left( \E_{x_{-k}} \left[ x x^T \right] W^{(t)} +  \E_{x_{-k}}[x] b^{(t)} - \E_{x_{-k}}[x f (x)]\right),\\
    b^{(t+1)} & = b^{(t)} - 2 \gamma \left( \E_{x_{-k}} [x^T] W^{(t)} + b^{(t)} - \E_{x_{-k}} [f(x)] \right).
    % &= W^t - 2 \gamma \left( \E_x\left[ x x^T \right] W^t +  \E_x[x] b^t - \E_x[x F_{-S} \cdot (x)]\right) 
\end{align}
Note that $\E_{x_{-k}} \left[ x x^T \right] = \mathbb{I}_n $, $\E_{x_{-k}}[x] = \vec e_{k}$. So we have
\begin{align}
    \forall j\neq k:~W_{j}^{(t+1)} &= W_{j}^{(t)}(1 - 2 \gamma) + 2 \gamma \E_{x_{-k}} [x_j \cdot  f(x)],\\
    W_{k}^{(t+1)} &= W_{k}^{(t)} - 2 \gamma ( W_{k}^{(t)} +b^{(t)}) + 2\gamma \E_{x_{-k}} [f(x)],\\
    b^{(t+1)} & = b^{(t)}  - 2 \gamma (W_{k}^{(t)}+b^{(t)}) + 2 \gamma \E_{x_{-k}} [f(x)].
\end{align}
Using above equations, we have
\begin{align}
   W_{k}^{(t+1)} -  b^{(t+1)} &= W_{k}^{(t)} -  b^{(t)} = W_{k}^{(0)} -  b^{(0)}, \\
    W_{k}^{(t+1)}  +  b^{(t+1)} &= (1-4\gamma)(W_{k}^{(t)} +  b^{(t)}) + 4 \gamma \E_{x_{-k}} [f(x)].
\end{align}
Assume $\gamma < \frac{1}{4}$ and define $ 0< c =-\log(1-2\gamma) <-\log(1-4\gamma)$, then we have
\begin{align}
        W_{k}^{(t)}+ b^{(t)} &= (1-4\gamma)^t(W_{k}^{(0)} + b^{(0)} - \E_{x_{-k}} [f(x)]) + \E_{x_{-k}} [f(x)] \nonumber\\&= O((1-4\gamma)^t) + \E_{x_{-k}} [f(x)] = O(e^{-ct})+ \E_{x_{-k}} [f(x)]  \nonumber\\
        &= O(e^{-ct}) + \hat f(\emptyset) + \hat f(\{k\}),\\ 
       \forall j\neq k:~ W^{(t)}_{j} &= (1-2\gamma)^t(W^{(0)}_{j} - \E_{x_{-k}} [x_j \cdot f(x)]) + \E_{x_{-k}} [x_j \cdot f(x)] \nonumber\\&= O((1-2\gamma)^t)+ \E_{x_{-k}} [x_j \cdot f(x)] = O(e^{-ct}) + \E_{x_{-k}} [x_j \cdot f(x)] \nonumber\\
       &= O(e^{-ct}) + \hat f (\{j\}).
\end{align}
So the learned function is
\begin{align}
    \tilde{f}_{-k} (x;\Theta^{(t)}) = \frac{b^{(0)} - W_k^{(0)} + \hat f(\emptyset) + \hat f(\{k\})}{2} &+ \frac{W_k^{(0)}-b^{(0)}  + \hat f(\emptyset) + \hat f(\{k\})}{2}x_k\nonumber\\
    &
     + \sum_{j \neq k} \hat f(\{j\})\cdot  x_j + O(e^{-ct})
\end{align}
and the generalization error can be computed using Parseval Theorem:
\begin{align}
    &\mathrm{gen}(f,\tilde{f}_{-k}^{(t)}) = \frac 12\E_{x \sim U^n} \left[\left(f(x) - \tilde{f}_{-k}^{(t)}(x;\Theta^\infty) \right)^2 \right] \\ 
    &=\frac 12 \left(\frac{(b^{(0)} - W_k^{(0)} - \hat f(\emptyset) + \hat f(\{k\}))^2 + (W_k^{(0)}-b^{(0)}  + \hat f(\emptyset) - \hat f(\{k\}))^2}{4}\right)   + O(e^{-ct}) \\
    &=\frac{(b^{(0)} - W_k^{(0)} - \hat f(\emptyset) + \hat f(\{k\}))^2}{4}
    + O(e^{-ct}) \\
    &=
    \frac{(b^{(0)} - W_k^{(0)})^2}{4} +
    \frac{(\hat f(\emptyset) - \hat f(\{k\}))^2}{4}
    -2\frac{(b^{(0)} - W_k^{(0)})(\hat f(\emptyset) - \hat f(\{k\})}{4}
    + O(e^{-ct}).
\end{align}
Therefore, the expected generalization loss over different initializations is given by
\begin{align}
    \E_{\Theta^0}[\mathrm{gen}(f,\tilde{f}^{(t)}_{-k})] &= \E_{\Theta^0} \left[ \frac{(b^{(0)} - W_k^{(0)})^2 +
    (\hat f(\emptyset) - \hat f(\{k\}))^2}{4}
    \right]+ O(e^{-ct}) \\&= \frac{
    (\hat f(\emptyset) - \hat f(\{k\}))^2}{4} + \frac{\sigma^2}{2} + O(e^{-ct}).
\end{align}
Particularly, if the frozen function is unbiased, i.e., $\hat f(\emptyset) + \hat f(\{k\}) = 0$, we have
\begin{align}
    \E_{\Theta^0}[\mathrm{gen}(f,\tilde{f}^{(t)}_{-k})] &= \frac{
    (2\hat f(\{k\}))^2}{4} + \frac{\sigma^2}{2} + O(e^{-ct}) \nonumber\\
    &= 
    \hat f(\{k\})^2 + \frac{\sigma^2}{2} + O(e^{-ct}) = \mathrm{Inf}_k(f) + \frac{\sigma^2}{2} + O(e^{-ct}).
\end{align}
\end{proof}
\section{Further details on noise stability}
\label{sec:NSforPVR}
\subsection{Noise stability of PVR functions} 
As mentioned above, a PVR function consists of a pointer (the first bits of the input) and an aggregation function that acts on a specific window indicated by the pointer. We denote by $p$ the number of bits that define the pointer, and by $w$ the size of each window. For simplicity, we consider a slight variation of Boolean PVR task with non-overlapping windows, defined as follows:
\begin{itemize}
    \item PVR with \emph{non-overlapping} windows: the $2^p$ windows pointed by the pointer bits are non-overlapping, i.e., the first window is formed by bits $x_{p+1},..., x_{p+w}$, the second window is formed by bits $x_{p+w+1},...,x_{p+2w}$, and so forth.
\end{itemize}
% The first $p$ bits are the pointer bits, and to each of the $2^p$ possible values of the pointer corresponds a window of size $w$, contained in the subsequent bits of the input. 
The input size is thus given by $n:= p+2^p w$ and $p = O(\log(n))$. We denote by $g: \{\pm 1 \}^w \to \{ \pm 1\}$ the aggregation function, which we assume to be balanced (i.e., $\E_X[g(X)] = 0$). One can verify (see details below) that the noise stability of the PVR function $f$ is given by 
\begin{align} \label{eq:stabPVR}
    \Stab_\delta[f] = (1-\delta)^{p+w} + (1-\delta)^p (1-(1-\delta)^w) \cdot \Stab_\delta[g].
\end{align}
We notice that the $\Stab_{\delta}[f]$ is given by two terms: the first one depends on the window size and the second one on the stability of the aggregation function. For large enough window size, the second term in~\eqref{eq:stabPVR} is the dominant one, and $\Stab_\delta[f] $ depends on the stability of $g$. Thus from Theorem~\ref{thm:distr_match}, $f$ is not learned by GD (in the extended input space) in poly(n) time if the stability of the aggregation function is $n^{-\omega(1)}$. 
% for small pointer size and large window size, the $\Stab_\delta(f)$ depends significantly on the stability of the aggregation function, and the bound of Theorem~\ref{thm:distr_match} is tight. 
On the other hand, for small window size (specifically for $w = O(\log(n))$), the $\Stab_\delta(f)$ is `noticeable' (as defined in Appendix~\ref{sec:no_input_augmentation}) for every aggregation function, since the function value itself depends on a limited number of input bits. Thus, noise unstable aggregation functions (e.g. parities) can form a PVR function with `noticeable' stability, if the window size is $O(\log(n))$.
As examples, we consider the specific cases of pairty and majority vote as aggregation functions. 
\begin{itemize}
    \item Parity: If we choose $g(x_1,...,x_w) = \prod_{i=1}^w x_i$, one can observe that $\Stab_\delta(g) = (1-2\delta)^w$. Then, eq.~\eqref{eq:stabPVR} becomes $\Stab_\delta(f) = (1-\delta)^{w+p} [ 1-(1-2\delta)^w] +(1-\delta)^p $, and $\Stab_\delta(f)$ is decreasing with $w$.
    \item Majority: If we choose $g$ to be $g(x_1,...,x_w) = \sign(\sum_{i=1}^w x_i)$, then, for $w$ large, $\Stab_\delta (g) \sim 1-2/\pi \cdot  \arccos(1-2\delta)$ (see e.g.~\cite{o'donnell_2014}). Plugging this in eq.~\eqref{eq:stabPVR}, one can observe that also for majority vote $\Stab_\delta(f) $ is decreasing with $w$.
\end{itemize}

\paragraph{Computation of~\eqref{eq:stabPVR}.}
We compute the expression in~\eqref{eq:stabPVR} with the following: 
\begin{align}
    \Stab_\delta[f] &= 1 - 2 \NS_\delta[f],
\end{align}
where $\NS_\delta[f] := \pr(f(X) \neq f(Y))$ is the Noise sensitivity of $f$, defined as the probability that perturbing each input bit independently with probability $\delta$ changes the output of $f$ and where we denoted by $Y $ the vector obtained from $X$ by flipping each component with prob. $\delta$ independently. To compute $ \NS_\delta[f]$, we can first distinguish depending on whether the perturbation affects the pointer bit:
\begin{align*}
    \NS_\delta[f] :&= \pr(f(X) \neq f(Y)) \\
    &  = (1-\delta)^p \cdot \pr(f(X) \neq f(Y) \mid X^p = Y^p )  + (1-(1-\delta)^p) \pr(f(X) \neq f(Y) \mid X^p \neq Y^p ) \\
    & = (1-\delta)^p \cdot \pr(f(X) \neq f(Y) \mid X^p = Y^p )  + (1-(1-\delta)^p) \frac 12,
\end{align*}
where the last inequality holds since we are using non-overlapping windows and we assumed $g$ to be  balanced. To compute the first term, we can condition on whether any bit in the window pointed by $X$ and $Y$ is changed:
\begin{align*}
    \pr(f(X) \neq &f(Y) \mid X^p = Y^p ) \\
    &=  (1-\delta)^w \cdot  \pr(f(X) \neq f(Y) \mid X^p = Y^p, X_{P(X^p)} = Y_{P(Y^p)} ) +\\
    & \hspace{0.8cm} +(1- (1-\delta)^w) \cdot  \pr(f(X) \neq f(Y) \mid X^p = Y^p, X_{P(X^p)} \neq  Y_{P(Y^p)} )\\
    & = (1- (1-\delta)^w) \cdot  \pr(f(X) \neq f(Y) \mid X^p = Y^p, X_{P(X^p)} \neq  Y_{P(Y^p)} )\\
    & = (1- (1-\delta)^w) \cdot \NS_\delta[g],
\end{align*}
where the last inequality holds because $g$ is unbalanced. By replacing $\NS_\delta[g] = \frac 12 - \frac 12 \Stab_\delta[g]$ and rearranging terms one can obtain~\eqref{eq:stabPVR}.

\subsection{Noise stability and initial alignment~\cite{AbbeINAL}}
\cite{AbbeINAL} introduced the notion of Initial Alignment (INAL) between a target function $f: \cX \to \cY$ and a neural network $\NN:\cX \to \cY$ with random initialization $\Theta_0$ and neuron set $V_{NN}$. The $\INAL$ is defined as 
\begin{align}
    \INAL(f, \NN) := \max_{v \in V_{\NN} } \E_{\Theta^0 } \E_{X} \left[ f(X) \cdot \NN_{\Theta^0}^{(v)}(X) \right]^2,
\end{align}
where $\NN_{\Theta^0}^{(v)} $ denotes the output of neuron $v$ of the network at initialization. In~\cite{AbbeINAL}, it is shown that GD cannot learn functions that have negligible initial alignment with a fully connected architectures with i.i.d. Gaussian initialization (with rescaled variance) and $\ReLU$ activation.
Here, we show how the $\INAL$ can be related to the noise sensitivity of the target function. We remark that both noise sensitivity and $\INAL$ are related by the cross-predictability (CP). Let us first give two definitions. Recall that for $f:\{\pm 1\}^n \to \{ \pm 1 \}$, $\NS_\delta[f] = \frac 12 -\frac 12 \Stab_\delta[f]$.
\begin{defin}[High-Degree.] 
We say that a family of functions $f_n:\{ \pm 1\}^n \to \mathbb{R}$  is ``high-degree'' if for any fixed $k$, $W^{\leq k} (f_n) $ is negligible. 
\end{defin}
\begin{defin}[Noise sensitive function]
We say that a family of functions $f_n:\{\pm 1 \}^n \to \{ \pm 1\}$ is noise sensitive if for any $\delta \in (0,1/2]$, $\NS_\delta [f_n] =  1/2 - o_n(1). $
\end{defin}

\begin{defin}[Strongly noise sensitive function]
We say that a family on functions $f_n:\{\pm 1 \}^n \to \{ \pm 1\}$ is strongly noise sensitive if for any $\delta \in (0,1/2]$, $\NS_\delta [f_n] =  1/2 - n^{-\omega(1)}. $
\end{defin}
Then we can prove the following.
\begin{prop} \label{prop:NS_INAL}
Let $\NN_{n}:\bR^n \to \bR$ be a fully connected neural network with Gaussian i.i.d. initialization and expressive activation (as in Theorem 2.7 in~\cite{AbbeINAL}). If $\INAL(\NN_n,f_n) = n^{-\omega(1)}$, then $f_n$ is noise sensitive.
\end{prop}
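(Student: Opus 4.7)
The plan is to prove the contrapositive: if $f_n$ is not noise sensitive, then $\INAL(\NN_n, f_n)$ is noticeable and in particular not $n^{-\omega(1)}$. So suppose there exist $\delta_0 \in (0,1/2]$ and $c>0$ such that $\NS_{\delta_0}[f_n] \le 1/2 - c$, equivalently $\Stab_{\delta_0}[f_n] \ge 2c$, along an infinite sub-sequence of $n$. Using the identity $\Stab_\delta[f] = \sum_{k\ge 0}(1-2\delta)^k W^k[f]$ and choosing a constant $k_0 = k_0(\delta_0,c)$ with $(1-2\delta_0)^{k_0}\le c$, the tail $\sum_{k>k_0}(1-2\delta_0)^k W^k[f_n]$ is bounded by $c$, hence $W^{\le k_0}[f_n]\ge c$. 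Pigeonholing over the $\binom{n}{\le k_0}$ low-degree sets then produces $S_n\subseteq[n]$ with $|S_n|\le k_0$ and $\hat{f_n}(S_n)^2 \ge c/\binom{n}{\le k_0} = \Omega(n^{-k_0})$.

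The next step uses a sign-symmetry argument to diagonalize the alignment over Fourier characters. Pick any first-hidden-layer neuron $v$, write $\NN^{(v)}_{\Theta^0}(x) = \sigma(W^{(v)}\cdot x + b^{(v)})$, and abbreviate $a_S(W,b) := \E_X[\chi_S(X)\sigma(W\cdot X+b)]$. For any $\epsilon\in\{\pm 1\}^n$, the distribution-preserving substitution $X\mapsto X\odot\epsilon$ together with $W\cdot(X\odot\epsilon) = (W\odot\epsilon)\cdot X$ yields $a_S(W,b) = \chi_S(\epsilon)\,a_S(W\odot\epsilon,b)$. Since $W\odot\epsilon$ has the same Gaussian law as $W$, taking expectations gives, for all $S,T$ and all $\epsilon$,
\begin{align*}
\E_{\Theta^0}[a_S(W^{(v)},b^{(v)})\,a_T(W^{(v)},b^{(v)})] = \chi_{S\triangle T}(\epsilon)\cdot\E_{\Theta^0}[a_S(W^{(v)},b^{(v)})\,a_T(W^{(v)},b^{(v)})],
\end{align*}
which forces the off-diagonal contributions $S\neq T$ to vanish exactly. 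Consequently,
\begin{align*}
\E_{\Theta^0}\bigl[\E_X[f_n(X)\NN^{(v)}_{\Theta^0}(X)]^2\bigr] = \sum_T \hat{f_n}(T)^2\,\E_{\Theta^0}[a_T(W^{(v)},b^{(v)})^2].
\end{align*}

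It remains to bound $\E_{\Theta^0}[a_{S_n}(W^{(v)},b^{(v)})^2]$ from below. Scaling $W^{(v)}$ so that $W^{(v)}\cdot X \sim\cN(0,1)$ for $X$ uniform on $\{\pm 1\}^n$, and expanding $\sigma(\cdot + b)$ in the Hermite basis with coefficients $\alpha_k(b)$, an index-matching computation gives, for fixed $S$ of constant size $k=|S|$, $a_S(W,b) = \alpha_k(b)\,k!\prod_{i\in S}W_i + (\text{lower-order Hermite terms})$, whence $\E_{\Theta^0}[a_S(W^{(v)},b^{(v)})^2] = \Theta(n^{-k})$ provided $\E_b[\alpha_k(b)^2]>0$, which is the precise content of ``expressive'' imported from~\cite{AbbeINAL}. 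Combining,
\begin{align*}
\INAL(\NN_n, f_n) \ge \E_{\Theta^0}\bigl[\E_X[f_n(X)\NN^{(v)}_{\Theta^0}(X)]^2\bigr] \ge \hat{f_n}(S_n)^2\cdot\Theta(n^{-|S_n|}) = \Omega(n^{-2k_0}),
\end{align*}
noticeable along the sub-sequence, contradicting $\INAL(\NN_n, f_n) = n^{-\omega(1)}$.

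The main obstacle is the Hermite analysis in the last step: one must extract the exact version of ``expressive'' in~\cite{AbbeINAL} that yields a noticeable second moment of $\alpha_k(b^{(v)})$ for every constant $k$, and verify that the lower-order Hermite corrections to $a_S$ do not erase the $\Theta(n^{-|S|})$ leading behavior after squaring (the cleanest route is to square the full Hermite series term-by-term and keep only the leading diagonal in $\prod_{i\in S} W_i^2$ as a lower bound, using independence of the $W_i$'s). Reassuringly, the sign-symmetry diagonalization is already exact at the level of second moments, and since $\INAL$ is a maximum over all neurons, restricting attention to a single first-hidden-layer neuron is sufficient; the required single-neuron computation can likely be imported from~\cite{AbbeINAL} as a black box.
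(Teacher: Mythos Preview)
Your argument is correct and shares the same two-step skeleton as the paper's proof: both factor through the intermediate property ``$f_n$ is high-degree'' (i.e., $W^{\le k}[f_n]$ is negligible for every fixed $k$), and both handle the step between high-degree and noise sensitivity by the same tail-splitting of $\Stab_\delta[f_n]=\sum_k(1-2\delta)^kW^k[f_n]$ at a constant threshold $k_0$. The difference is in the other step. The paper does not re-derive the link between negligible $\INAL$ and the high-degree property at all: it simply invokes Proposition~4.3 and Corollary~4.4 of~\cite{AbbeINAL}, which already state that for fully connected networks with expressive activation and Gaussian i.i.d.\ initialization, $\INAL(\NN_n,f_n)=n^{-\omega(1)}$ forces $f_n$ to be high-degree. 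Your sign-symmetry diagonalization and the Hermite lower bound $\E_{\Theta^0}[a_S^2]=\Theta(n^{-|S|})$ are precisely the content of those cited results; the ``main obstacle'' you flag is therefore not a gap in the argument but work that is already packaged in~\cite{AbbeINAL}. In short, you have reproved (in sketch) the black box the paper cites, and your pigeonhole step is an unnecessary detour: once $W^{\le k_0}[f_n]\ge c$ is known along a subsequence, this already contradicts ``high-degree'', hence (contrapositively via~\cite{AbbeINAL}) contradicts $\INAL=n^{-\omega(1)}$, with no need to isolate a single heavy coefficient $S_n$.
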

\begin{proof}
We need to show that for any $\delta \in [0,1/2]$, $\sum_{k=0}^n (1-2\delta)^k W^k(f_n) = o_n(1)$, or analogously that for any $\epsilon>0$ and for $n $ large enough $ \sum_{k=0}^n (1-2\delta)^k W^k(f_n) < \epsilon$.  Fix $\delta$ and let $\epsilon >0$. Let $k_0$ be such that $(1-2\delta)^{k_0} < \epsilon/2$. Then,
\begin{align}
    \sum_{k=0}^n (1-2\delta)^k W^k(f_n)  &= \sum_{k=0}^{k_0} (1-2\delta)^k W^k(f_n)  + \sum_{k=k_0+1}^{n} (1-2\delta)^k W^k(f_n) \\
    & \leq W^{\leq k_0}(f_n) + (1-2\delta)^{k_0 + 1} \sum_{k=k_0 + 1}^n W^k(f_n).
\end{align}
By Proposition 4.3 and Corollary 4.4 in~\cite{AbbeINAL}, if $\INAL(f_n,\sigma ) =n^{-\omega(1)}$ then $f_n$ is high degree. Thus, $W^{\leq k_0}(f_n) =n^{-\omega(1)} $, and clearly for $n$ large enough $W^{\leq k_0}(f_n) <\epsilon/2 $. On the other hand, $\sum_{k=k_0+1}^n W^k(f_n) <1$, since $f $ is Boolean-valued. Thus, $\sum_{k=0}^n (1-2\delta)^k W^k(f_n)<\epsilon $, and the Proposition is proven.
\end{proof}

\begin{prop} \label{prop:NS_high_degree}
If $f_n $ is strongly noise sensitive, then $f_n$ is high degree.
\end{prop}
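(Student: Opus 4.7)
The plan is to translate strong noise sensitivity into a statement about $\Stab_\delta[f_n]$, and then use the Fourier expansion of stability to extract a bound on the low-degree weight of $f_n$. Since $f_n$ is $\{\pm 1\}$-valued, the identity $\NS_\delta[f] = \tfrac{1}{2} - \tfrac{1}{2}\Stab_\delta[f]$ recalled in the paper immediately converts the hypothesis $\NS_\delta[f_n] = 1/2 - n^{-\omega(1)}$ into $\Stab_\delta[f_n] = n^{-\omega(1)}$, for every fixed $\delta \in (0, 1/2]$.

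Next I would invoke the Fourier-analytic expression
\begin{equation*}
\Stab_\delta[f_n] = \sum_{k=0}^n (1-2\delta)^k W^k(f_n),
\end{equation*}
which is the same identity used in the remark after Lemma~\ref{lem:CP_Stab}. Since every $W^k(f_n)$ is nonnegative, restricting the sum to $k \le k_0$ gives the lower bound
\begin{equation*}
\Stab_\delta[f_n] \;\ge\; \sum_{k=0}^{k_0} (1-2\delta)^k W^k(f_n) \;\ge\; (1-2\delta)^{k_0} \, W^{\leq k_0}(f_n).
\end{equation*}
Rearranging yields $W^{\leq k_0}(f_n) \le (1-2\delta)^{-k_0} \Stab_\delta[f_n]$.

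The final step is to fix an arbitrary constant $k_0$, choose any fixed $\delta \in (0, 1/2)$ (say $\delta = 1/4$), and conclude that $(1-2\delta)^{-k_0}$ is a constant independent of $n$, so the bound becomes $W^{\leq k_0}(f_n) \le C_{k_0} \cdot n^{-\omega(1)} = n^{-\omega(1)}$, which is exactly the definition of $f_n$ being high-degree. There is no real obstacle here: the argument is a one-line consequence of the Fourier expansion of stability, and the only subtlety is noting that $k_0$ is an arbitrary constant so that the multiplicative factor $(1-2\delta)^{-k_0}$ does not interfere with the $n^{-\omega(1)}$ decay. In fact, the proof shows slightly more: it only requires the existence of a \emph{single} $\delta \in (0, 1/2)$ for which $\Stab_\delta[f_n]$ decays super-polynomially, rather than this holding for every $\delta$.
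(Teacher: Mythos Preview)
Your proof is correct and follows essentially the same approach as the paper: both convert to $\Stab_\delta[f_n]=n^{-\omega(1)}$, truncate the Fourier expansion of stability at level $k_0$, and use nonnegativity of the weights to obtain $(1-2\delta)^{k_0}W^{\le k_0}(f_n)\le \Stab_\delta[f_n]$. Your additional remark that a single $\delta\in(0,1/2)$ suffices is a valid sharpening not stated in the paper.
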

\begin{proof}
We need to show that if $\sum_{k=0}^n (1-2\delta)^k W^k(f_n) = n^{-\omega(1)} $ then for any constant $k$, $W^{\leq k}(f_n) = n^{-\omega(1)} $. Take $k_0 \in \bN$, then
\begin{align}
    n^{-\omega(1)} &= \sum_{k=0}^n (1-2\delta)^k W^k(f_n) \geq \sum_{k=0}^{k_0} (1-2\delta)^k W^k(f_n)  \geq (1-2\delta)^{k_0} W^{\leq k_0}(f_n).
\end{align}
Clearly this implies that $W^{\leq k_0}(f_n) = n^{-\omega(1)}$, and the proof is concluded.
\end{proof}

% The proofs of Proposition~\ref{prop:NS_INAL} and~\ref{prop:NS_high_degree} can be found in Appendix~\ref{appendix:NS}.

\section{Computation of the Boolean influence for PVR functions} \label{sec:BoolInfForPVR}
% TODO: making some of the x's capital
In this section, we compute the Boolean influence for PVR functions. Here, we consider PVR functions with sliding windows and cyclic indices (i.e., $x_{n+1} = x_{p+1}$). The Boolean influence for PVR tasks with truncated windows or non-overlapping windows can be calculated in a similar manner. Also note that we never freeze pointer bits in this paper as done in \cite{Zhang2021PointerVR}; therefore, we skip the calculation of the Boolean influence for pointer bits. Consider a bit at $k$-th position ($k > p$). Note that this bit appears in $w$ windows. We denote by $U^n$ the uniform distribution over the $n$-dimensional hypercube. Using Lemma~\ref{lem:Boolean_inf}, we have: 
\begin{align}  
    \mathrm{Inf}_{k}(f) &= \mathbb{E}_{x \sim U^n} \frac{1}{2} \left(f\left(x\right)-f_{-k}\left(x\right) \right)^2 \\
    &= \mathbb{E}_{x \sim U^n} \frac{1}{2} \Biggl(\sum_{i=0}^{w - 1}\mathds{1}(P(x^p) = k-i)\Bigl(g\left(x_{k-i}, \ldots, x_{k}, \ldots, x_{k-i+w - 1}\right) \\ &\phantom{= \mathbb{E}_{x \sim U^n} \frac{1}{2}} - g\left(x_{k-i}, \ldots, 1, \ldots, x_{k-i+w - 1}\right)\Bigr) \Biggr)^2 \nonumber \\
    &= \mathbb{E}_{x \sim U^n} \frac{1}{2} \Biggl(\sum_{i=0}^{w - 1}\mathds{1}(P(x^p) = k-i)\Bigl(g\left(x_{k-i}, \ldots, x_{k}, \ldots, x_{k-i+w - 1}\right) \\ &\phantom{= \mathbb{E}_{x \sim U^n} \frac{1}{2}} - g\left(x_{k-i}, \ldots, 1, \ldots, x_{k-i+w - 1}\right)\Bigr)^2 \Biggr) \nonumber \\
    &= \frac{1}{2^p}\sum_{i=1}^{w} \mathrm{Inf}_{i}(g) \label{eq:total_inf}.
\end{align}
Note that the expression $\sum_{i=1}^{w} \mathrm{Inf}_{i}(g)$ in Equation (\ref{eq:total_inf}) is known as the \textit{total influence} of the aggregation function $g$~\cite{o'donnell_2014}.  Below follows the value of the Boolean influence of the PVR task $f$, depending on different aggregation functions:
\begin{itemize}
    \item \textbf{Parity.} If we choose $g$ to be the parity function, i.e., $g(x_1, \ldots, x_w) = x_1x_2\cdots x_w$ then $\mathrm{Inf}_i(g) = \mathbb{P}(g(x) \neq g(x + e_i)) = 1$. Therefore, $\mathrm{Inf_k}(f) = \frac{w}{2^p}$. 
    \item \textbf{Median/Majority vote.} We define the majority vote function as $g(x_1, \ldots, x_w) = \mathrm{sign}(x_1 + \cdots + x_w)$ where the sign function outputs $+1$, $-1$, and $0$. First assume $w$ is odd. In this case, flipping the $i$-th bit matters only in the case where exactly $\frac{w-1}{2}$ other bits have the same sign as the $i$-th bit. Therefore, $\mathrm{Inf}_i(g) =\mathbb{P}(g(x) \neq g(x + e_i))= 2^{-(w-1)}\binom{w-1}{\frac{w-1}{2}}$. Similarly, if $w$ is even, flipping the $i$-th bit only matters if there are exactly $\frac{w}{2}$ or $\frac{w}{2} - 1$ other bits with the same sign. Using Lemma~\ref{lem:Boolean_inf}, $\mathrm{Inf}_i(g) = \mathbb{E}_{x \sim U^w} \frac{1}{2} \left(g\left(x\right)-g_{-i}\left(x\right) \right)^2 =  2^{-(w+1)}\left(\binom{w-1}{\frac{w}{2}}+\binom{w-1}{\frac{w}{2} - 1}\right) = 2^{-w}\binom{w-1}{\frac{w}{2}}$. Therefore, for odd $w$,  
    $\mathrm{Inf_k}(f) = \frac{w}{2^{(p+w-1)}}\binom{w-1}{\frac{w-1}{2}}$ and for even $w$,
    $\mathrm{Inf_k}(f) = \frac{w}{2^{(w+p)}}\binom{w-1}{\frac{w}{2}}$.
    
    \item \textbf{Min/Max.} Here we consider the $\min$ function, $g(x_1, \ldots, x_w) = \min(x_1, \ldots, x_w)$. By symmetry, the Boolean influence values are the same for the $\max$ function. In this case, flipping the $i$-th bit only matters if all bits other than $x_i$ are equal to $+1$. Thus, the Boolean influence is given by $\mathrm{Inf}_i(g) = 2^{-(w-1)}$ and hence, $\mathrm{Inf_k}(f) = \frac{w}{2^{(p + w - 1)}}$.
\end{itemize}
One can see how different parameters of the Boolean PVR functions, such as $p$, $w$, and $g$ affect the Boolean influence. Assuming fixed window size, $w$, each bit is less likely to appear in a window if the number of pointer bits, $p$, is increased. Hence for fixed $w$ and $g$, an increase in $p$ results in smaller influence for all the bits. On the other hand a change of $w$ has a two-fold effect. First, since each bit appears in $w$ windows, the increase of $w$ makes each bit more likely to appear in a window. On the other hand, for some functions such as majority-vote and min/max, the increase of $w$ reduces the Boolean influence of the aggregation function for all bits. Thus, the increase of $w$ can result in either an increase of the Boolean influence (for example, if parity is used) or a decrease of the Boolean influence (for instance, if min/max aggregation is used). 
We refer to Appendix~\ref{appendix:experiments} for experiments on PVR tasks with varying window size.

\section{Experiment Details and Additional Experiments}\label{appendix:experiments}
In this section, we describe the experiments in more detail. Furthermore, we demonstrate more experiments on the comparison of the out-of-distribution generalization error and the Boolean influence. 
\subsection{Architectures and Procedure}
We first explain the general experimentation setup for PVR tasks and other functions. Afterward, we describe the procedure used for linear neural networks and results presented in Section~\ref{sec:linearmodels}.

\paragraph{Architectures.}Three architectures have been used for the main experiments of this paper: MLP, the Transformer \cite{vaswani2017attention-transformer}, and MLP-Mixer \cite{tolstikhin2021mlpmixer}. Below, we describe each of these architectures:
\begin{itemize}
    \item \textbf{MLP.} The MLP model consists of 4 fully connected hidden layers of sizes 512, 1024, 512, and 64. We used $\mathrm{ReLU}$ as the activation function for all layers except the last layer.
    \item \textbf{Transformer.} 
    We follow the standard decoder-only Transformer architectures~\citep{raffel2019exploring} that are commonly used for language modeling, and are also the backbone of Vision Transformers (ViTs)~\citep{dosovitskiy2020image}. 
    Specifically, an embedding layer is used to embed the binary $+1$ and $-1$ values into 256 dimensional vectors, and a shared embedding layer is used for all the binary tokens in the input sequence. Then, the embedded input is passed through 12 transformer layers \cite{vaswani2017attention-transformer}. In each transformer layer, the hidden dimension of MLP block is also 256. Moreover, 6 heads are used for each self-attention block. At the end, a linear layer is used to compute the output of the model.
    \item \textbf{MLP-Mixer.} Similar to the Transformer based model, first we embed $+1$ and $-1$ tokens into a 256 dimensional vector using a shared embedding layer for all the binary input tokens. Then, the embedded input is passed through a standard 12-layer MLP-Mixer model~\cite{tolstikhin2021mlpmixer}. Finally, a linear layer is used to compute the output. The MLP-Mixer architectures are similar to the decoder-only Transformers, except that ``mixer layers'' based on MLPs are used instead of the attention mechanism. Please see \cite{tolstikhin2021mlpmixer} for details.
\end{itemize}

\paragraph{Procedure.}To perform each of the experiments, we first fix a dimension to be frozen during the training. Afterward, we train the model on the frozen training set to make the model learn the frozen function. Finally, we evaluate the trained model uniformly on the Boolean hypercube ($\{\pm 1\}^n$) to compute the out-of-distribution generalization error.   

Now, we explain the hyperparameters used for the experiments. Note that the experiments are aimed to exhibit an implicit bias towards low degree monomials and consequently to show that the generalization error is close to the Boolean influence. Therefore, the experiments are not focused on the learning of the frozen function itself and the in-distribution generalization error. In other words, we are interested in the setting that the frozen function is learned during the training, and then we want to examine the out-of-distribution generalization. Due to this reason, we have always used a relatively large number of training samples. Moreover, we have not used a learning rate scheduler and we have not done extensive hyperparameter tuning. Generally, we have considered two optimizers for the training of the models: mini-batch SGD and Adam \cite{kingma2014adam}. We describe the setting used for each of them below:
\begin{itemize}
    \item \textbf{SGD.} When using SGD, we tried using $0$ and $0.9$ momentum. We observed that the use of momentum remarkably accelerates the training process, and hence we continued with $0.9$ momentum. Nonetheless, we also performed experiments using SGD without momentum, and we did not notice any difference in their preference towards low degree monomials and therefore the out-of-distribution generalization error. For learning rate of SGD, we tried values in $\{10^{-3}, 5\times10^{-4}, 2.5\times 10^{-4}, 10^{-4}\}$ and selected the learning rate dependent on the model and task (more on this below). Additionally, we always used the mini-batch size of 64 with SGD. 
    \item \textbf{Adam.} In our experiments with Adam, we used default values of the optimizer and only changed the learning rate.
For learning rate, we tried values in $\{10^{-4}, 5\times10^{-5}, 10^{-5}\}$ and finally selected $5\times 10^{-4}$. While employing Adam, we used mini-batch size of 64 for PVR tasks with 3 pointer bits (11 bits in total) and mini-batch size of 1024 for PVR tasks with 4 pointer bits (20 bits in total). 
\end{itemize}
 We selected the learning rate (and in general, hyperparameters) based on the speed of the convergence and its stability. Note that we set the number of epochs for each task to a value to ensure the training loss and in-distribution generalization error are small enough.\footnote{This is problem dependent; nonetheless, we generally refer to errors of order of magnitude $10^{-2}$ or less.}

Finally, we note that all of our experiments are implemented using PyTorch framework \cite{torch}, and the training has been done on NVIDIA A100 GPUs. The experiments presented in this paper took approximately 250 GPU hours.  
Note that we have repeated PVR experiments with 3 pointer bits 40 times and the rest of the experiments 20 times, and have reported the averaged results and $95\%$ confidence interval.
Please refer to the code for more details on the experiments.

\paragraph{Linear neural networks.} For the experiments on linear models, we considered fully connected linear neural networks with fixed hidden layer size of 256. As presented in Figure \ref{fig:linear}, we varied the initialization and depth of these networks. For optimizing linear neural networks, we used mini-batch SGD with 64 and $10^{-5}$ as the batch size and learning rate respectively. Note that we trained linear models on CPU and stopped the training when the loss became less than $10^{-8}$.

\begin{figure}[h]
%
% \vspace{-2em}
     \centering
     \begin{subfigure}[b]{0.75\textwidth}
         \centering
         \includegraphics[width=\textwidth]{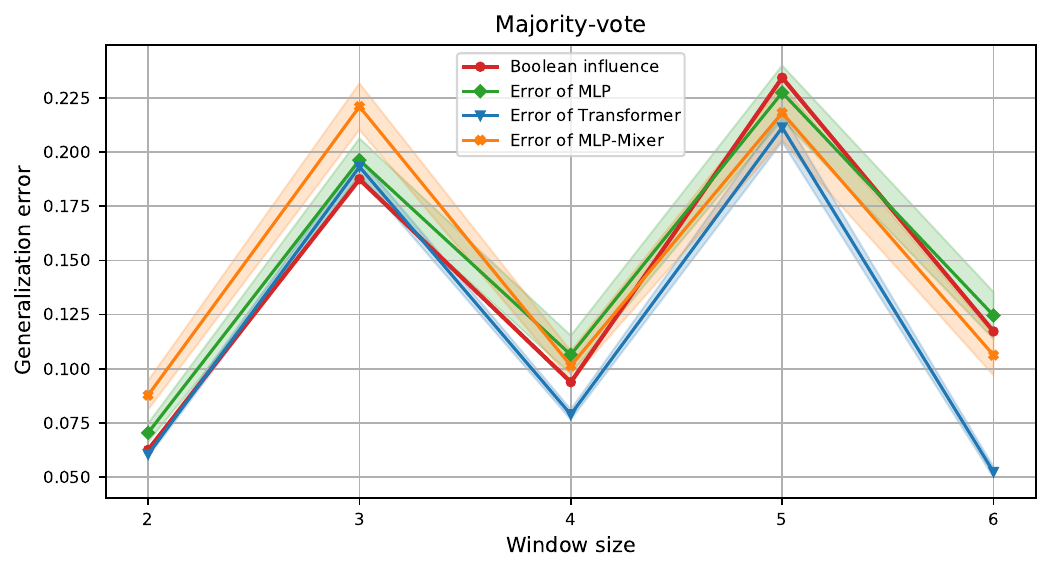}
        %  \caption{Effect of depth. Neural networks have fixed width of 256 and are initialized using $\mathcal{U}(-\frac{1}{\sqrt{N_{in}}} , \frac{1}{\sqrt{N_{in}}})$.}
        %  \label{fig:linear-depth}
     \end{subfigure}
     \hfill
     \begin{subfigure}[b]{0.5\textwidth}
         \centering
         \includegraphics[width=\textwidth]{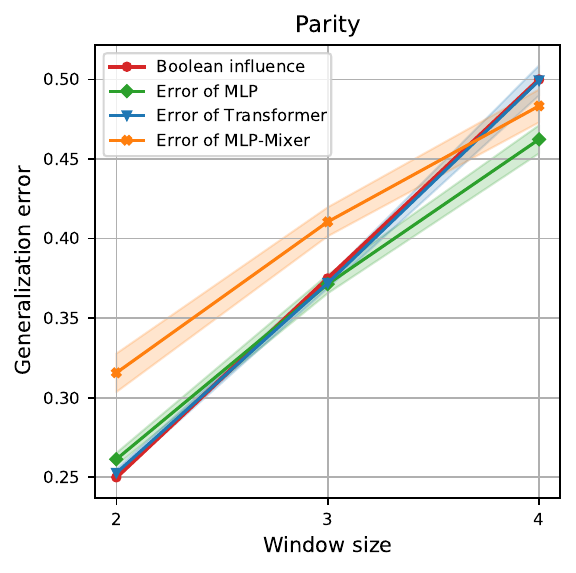}
        %  \caption{Effect of initialization scale. Neural networks have fixed width of 256 and depth of 3.}
        %  \label{fig:linear-init}
     \end{subfigure}
     \hfill
        \caption{PVR tasks with 3 pointer bits and varying window sizes where the aggregation function is majority-vote (top) and parity (bottom). X-axis represents the window size of the PVR task, and y-axis shows the value of the generalization error and the Boolean influence. 
        % x-axis and y-axis represent the frozen index and the generalization error respectively. Error of linear regression model refers to $\frac 14 (\hat f(\emptyset) - \hat f(\{k\}))^2$ expression that is derived in Theorem \ref{thm:can_dist_shift}.
        }
        \label{fig:majorityparity}
        % \vspace{-2em}
\end{figure}
\subsection{Additional Results}
\paragraph{More PVR tasks.} First, we compare the generalization error and the Boolean influence for more PVR functions. In the additional experiments, we consider the cyclic version of the PVR (i.e., $x_{n+1} = x_{p+1}$), and due to the symmetry, we only freeze one dimension of the input. Also, we use Adam to optimize the models (instead of SGD) due to faster convergence. As a first example, we consider PVR tasks with 3 pointer bits (11 bits in total) and varying window sizes. We use majority-vote and parity as the aggregation functions (see Appendix \ref{sec:BoolInfForPVR} for computation of the Boolean influence for such functions). 
In Figure \ref{fig:majorityparity}, the window size of the aforementioned PVR tasks is varied in the x-axis and the averaged generalization error over 40 experiments is shown. Figure \ref{fig:majorityparity} (top) corresponds to the case where majority-vote is used as the aggregation function whereas Figure \ref{fig:majorityparity} (bottom) shows the results when parity is the aggregation function. It can be seen that in this setting, the generalization error of all models follow the Boolean influence closely. Note that learning parity function becomes increasingly difficult as the window size is increased. Even for $w=4$, the MLP and MLP-Mixer models could not learn the frozen function completely and their in-distribution generalization loss was between $0.05$ and $0.10$.

Furthermore, we experimented on PVR tasks of larger scales. To this end, we consider PVR tasks with 4 pointer bits (20 bits in total) and different window sizes and aggregation functions. For these experiments, we also used Adam optimizer with batch-size of 1024. We repeated each experiment 20 times. The generalization error and Boolean influence for these functions are given in Table \ref{table:pvrs}. It can be observed that for these experiments, the generalization errors of MLP and Transformer are well approximated by the Boolean influence; while MLP-Mixer has higher generalization error. 
The results of Figure \ref{fig:majorityparity} and Table \ref{table:pvrs} indicate that the implicit bias towards low-degree monomials 
also exists when Adam is used as the optimizer and therefore is not limited to SGD.

\begin{table}[H]
  \caption{Generalization error for PVR tasks with 4 pointer bits}
  \vspace{1em}
  \label{table:pvrs}
  \centering
  \begin{tabular}{cccccc}
    \toprule
    \multicolumn{2}{c}{PVR task}& & \multicolumn{3}{c}{Generalization error}                   \\
    \cmidrule(r){1-2} \cmidrule(r){4-6}
    Aggregation & Window size & \begin{tabular}{@{}c@{}}Boolean \\ influence\end{tabular} & MLP     & Transformer & MLP-Mixer \\
    \midrule
    Min & $2$ & $0.0625$ & $0.062 \pm 0.004$ & $0.068 \pm 0.006$ & $0.118 \pm 0.016$ \\
    Parity & $3$ & $0.1875$ & $0.206 \pm 0.004$ & $0.198 \pm 0.015$ & $0.329 \pm 0.017$ \\
    Majority & $3$ & $0.09375$ &  $0.099\pm 0.004$ & $0.095\pm 0.001$ & $0.194 \pm 0.022$ \\
    Majority & $4$ & $0.046875$ &  $0.051 \pm 0.004$ & $0.049 \pm 0.002$ & $0.094 \pm 0.019$ \\
    
    % Min & 2 & 0.125 & 0.133\pm0.001 & 0.136\pm 0.026 & 0.240 \pm 0.071 \\
    
    \bottomrule
  \end{tabular}
\end{table}
\paragraph{Non-PVR examples.} We also experimented on non-PVR functions. As the first example, we consider the target function $f_1(x_1, \ldots, x_{11}) = x_1x_2 + 2x_2x_3 + 3x_3x_4 +4x_4x_5+ \cdots + 10x_{10}x_{11}$ which is a sum of second degree monomials. For each of the architectures, we freeze a coordinate (ranging from $1$ to $11$), train the model on the frozen samples using mini-batch SGD and evaluate the generalization loss. The relation between the Boolean influence and the averaged generalization error over 20 runs for $f_1$ is demonstrated in Figure \ref{fig:f1}. It can be seen that the generalization errors of the MLP and the Transformer model are again well approximated by the Boolean influence. However, the generalization error of the MLP-Mixer follows the trend of Boolean influence with an offset. This implies that the MLP and Transformer have a stronger preference for low-degree monomials than the MLP-Mixer in this case. 
\begin{figure}[t]
    \centering
    \includegraphics[width=0.6\textwidth]{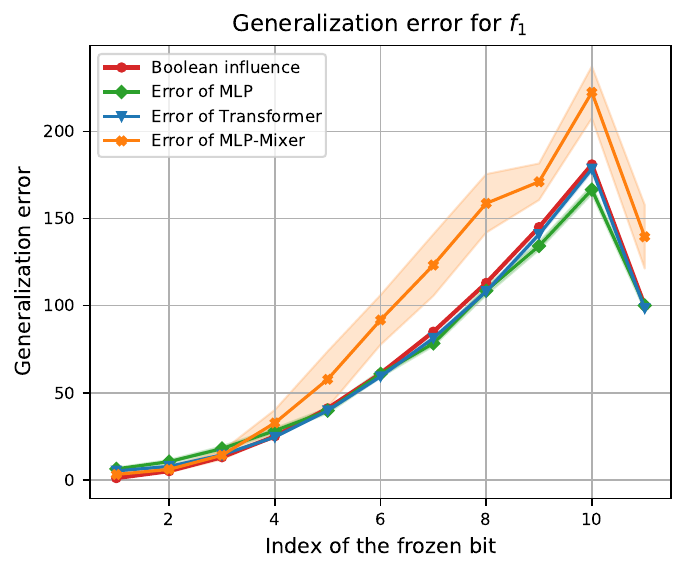}
    \caption{Comparison between the Boolean Influence and generalization error for $f_1(x_1, \ldots, x_{11}) = x_1x_2 + 2x_2x_3 + 3x_3x_4 + \cdots + 10x_{10}x_{11}$. Frozen coordinates are represented by the x-axis; while the y-axis represents the value of generalization error and the Boolean influence.}
    \label{fig:f1}
\end{figure}
\begin{figure}[t]
    \centering
    \includegraphics[width=0.6\textwidth]{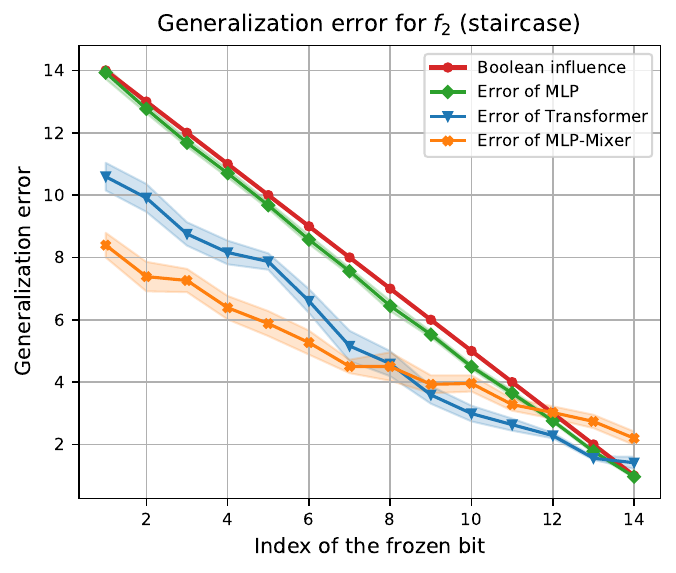}
    \caption{Comparison between the generalization loss in the canonical distribution shift setting and the Boolean Influence for $f_2(x_1, x_2, \ldots, x_{14}) = x_1 + x_1x_2+x_1x_2x_3 + \cdots + x_1x_2x_3\cdots x_{14}$.}
    \label{fig:f2}
\end{figure}
As the last example, we consider the vanilla staircase function for 14 bits,
i.e., $f_2(x_1, x_2, \ldots, x_{14}) = x_1 + x_1x_2 + x_1x_2x_3+ \cdots + x_1x_2x_3\cdots x_{14}$ (see \cite{abbe2021staircase, mergedstaircase} for theoretical results on such staircase functions). We train models for this function using mini-batch SGD. In Figure~\ref{fig:f2}, we report the generalization errors of the MLP, Transformer, and MLP-mixer models for each frozen coordinate of $f_2$, as well as the values of the Boolean influence of the corresponding index. Note that the generalization errors have been averaged over 20 runs. It can be observed that the generalization loss of MLP is very close to the Boolean influence in this case as well. However, the generalization errors of the Transformer and MLP-Mixer follow the Boolean influence with an offset. 
\begin{figure}[t]
% \vspace{-1em}
     \centering
     \begin{subfigure}{0.49\textwidth}
         \centering
         \includegraphics[width=\textwidth]{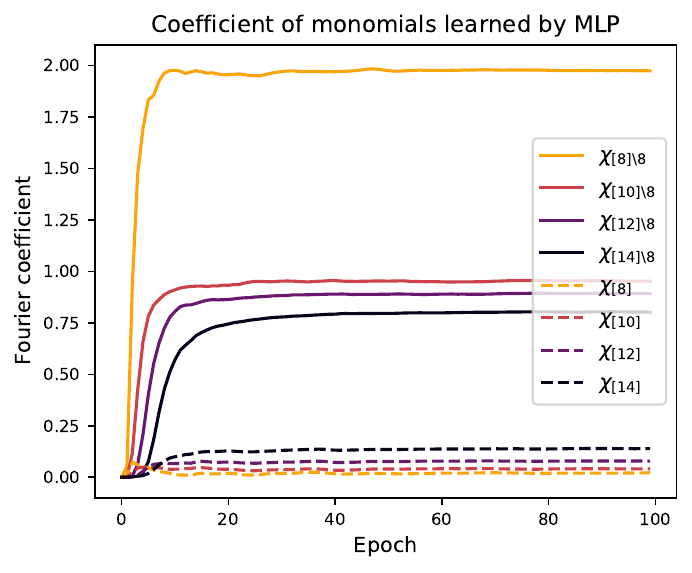}
         \caption{MLP}
        %  \label{fig:speed-staircase-mlp}
     \end{subfigure}
     \hfill
     \begin{subfigure}{0.49\textwidth}
         \centering
         \includegraphics[width=\textwidth]{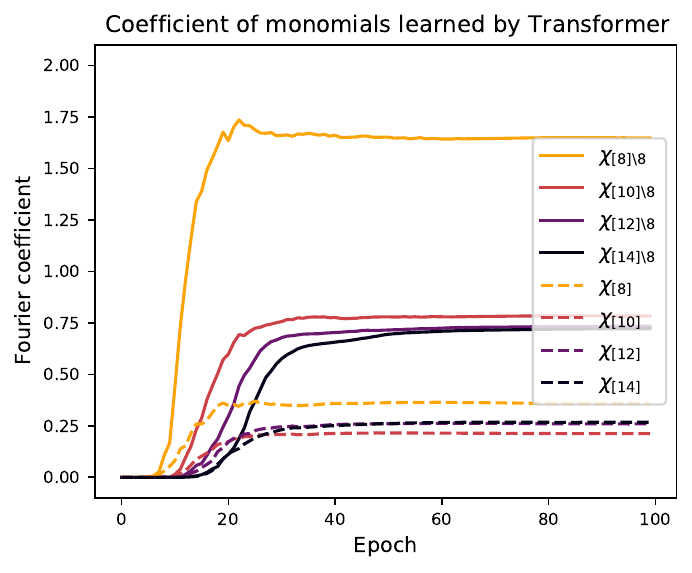}
         \caption{Transformer}
        %  \label{fig:speed-staircase-transformer}
     \end{subfigure}
     \hfill
     \begin{subfigure}{0.49\textwidth}
         \centering
         \includegraphics[width=\textwidth]{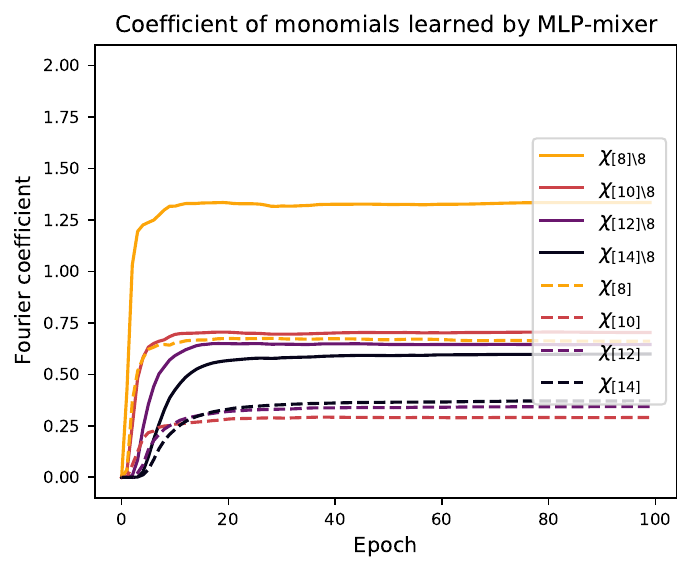}
         \caption{MLP-Mixer}
        %  \label{fig:speed-staircase-transformer}
     \end{subfigure}
        \caption{The coefficients of different monomials learned by the MLP, Transformer, and MLP-Mixer when learning the staircase function, with $x_{8}=1$ frozen during the training. Note that $\chi_T(x) = \prod_{i \in T} x_i$, e.g., $\chi_{[8]} = x_1x_2\cdots x_8$. Generally, monomials of lower degrees are learned faster by the models. Consequently, the models prefer to learn the monomials which exclude the frozen index.}
        \label{fig:speed-staircase}
        % \vspace{-1.5em}
\end{figure}
It is worth noting that the previous two functions are quite different than the PVR function: the PVR function has strong `symmetries' given by the fact that each window is treated similarly with the aggregation function, and thus one may expect that certain architectures would exploit such symmetries. Thus, the PVR function is still a staircase function  \cite{mergedstaircase}, of leap 2 in the example of Section \ref{sec:lowdegreebias}, but it is a staircase function with the related symmetries. Instead, the two functions considered here are staircase functions that do not have any such symmetries.

Figure \ref{fig:f2} shows that in some cases the Boolean influence may not always give a tight characterization of the generalization error. However, it appears that in such cases the offset still maintains the general trend of the influence. As an attempt to better understand this offset, recall the relation between the Boolean influence and the generalization error in terms of the implicit low-degree bias: the stronger the preference for low-degree monomials is, the closer the generalization error is to the Boolean influence. We thus plot the coefficient of different monomials for $f_2$ and these three models while $x_{8} = 1$ is frozen during the training in Figure \ref{fig:speed-staircase}. One can observe that for this staircase function, the bias towards low-degree monomials is stronger for MLP and it is weaker for Transformer and MLP-mixer. This well explains the relation between the Boolean influence and the generalization error of different models depicted in Figure~\ref{fig:f2}, where the generalization error of MLP is significantly closer to the Boolean influence, compared to the generalization error of Transformer and MLP-Mixer.

\section{Intuition on the linear neural networks}
\label{app:linear-nns}
% \anote{Can we do better here? If not, should we remove it at all?}
% \ea{Can we write more details about the simple scalar case with 2 layers that we discussed in order to explain intuitively why we expect this to happen in the right regime?}
% \anote{I'm not sure if something more precise could be said using the simpler case. What do you say Elisabetta? Maybe a plot of the value of the contributed bias of different layers would be useful here?}
At last, we provide heuristic justifications for the effect of depth and initialization on the generalization error and its closeness to the Boolean influence in the case of linear neural networks. Let  $f_{NN}(x;\Theta) = w_L^T(W_{L-1}^T(\cdots(W_1^Tx+b_1)\cdots)+b_{L-1}) + b_L$ be a linear neural network with depth $L$, after training in the canonical holdout setting where the $k$-th bit is frozen to $1$. Assume the target function to be linear. After training, the neural network learns the frozen function $f_{-k}(x) = f(x_{-k})$. Note that the bias of the frozen function is $\hat f(\{\emptyset \}) + \hat f(\{k\})$ (where with $\hat f$ we denote the Fourier coefficients of the target function $f$), that is expressed by the neural network by the following: 
\begin{align} \label{eq:biasB}
    B:= (w_L^TW_{L-1}^T\cdots W_2^Tb_1 + w_L^TW_{L-1}^T\cdots W_3^Tb_2 + \cdots +w^T_L &b_{L-1} + b_L)+  w_L^TW_{L-1}^T\cdots W_2^Tw_{1,k}^T
\end{align} 
where by $w_{1,k}$ we indicate the weights in the first layer of the frozen dimension $k$. Assuming the neural network has learned the function, we have 
\begin{align}
    &\hat f_{NN}(\{i\}) =  \hat f(\{i\}) \qquad \text{for all $i \neq k$},\\
    &  \hat f_{NN}(\{\emptyset\}) + \hat f_{NN}(\{k\})  = B =   \hat f(\{\emptyset\})+ \hat f(\{k\}),
\end{align}
where we denoted by $\hat f_{NN}$ the Fourier coefficients of $f_{NN}$. Therefore, applying Parseval identity we find that the generalization error equals
\begin{align}
\mathrm{gen}(f, f_{NN}) & = \frac 12 \E_X (f(X) - \hat f_{NN}(X))^2 \\
& = \frac 12 \left(\hat f(\{\emptyset \} )  - \hat f_{NN}(\{\emptyset \} )\right)^2 + \frac 12\left(\hat f(\{ k \} )  - \hat  f_{NN}(\{ k\} )\right)^2  \\
&= \frac{1}{2}\left(\hat f(\{\emptyset\}) - (w_L^T\cdots W_2^Tb_1  + \cdots + b_L)\right)^2 + \frac 12 \left(\hat f(\{k\}) - w_L^T \cdots W_2^Tw_{1,k}^T\right)^2 \\&= (\hat f(\{k\}) - w_L^T \cdots W_2^Tw_{1,k}^T)^2.	
\end{align}
Therefore, the amount of bias captured by $w_L^T W_{L-1}^T \cdots W_2^Tw_{1,k}^T$ determines the generalization error. Particularly, if $w_L^T \cdots W_2^T w_{1,k}^T$ goes to zero, the generalization error will become equal to the Boolean influence. Note that $x_k=1$ during the training, therefore $w_{1,k}$ has the same training dynamics as the bias of the first layer $b_1$. 

\paragraph{Effect of depth.} From~\eqref{eq:biasB}, we note that there are $L+1$ terms that contribute to $B$, and one of them is indeed $w_L^T \cdots W_2^T w_{1,k}^T$. Therefore as the depth $L$ increases, if those terms are appropriately aligned, one can expect that the contribution of each term, including $w_L^T \cdots W_2^T w_{1,k}^T$, decreases; thus, the generalization error becomes closer to the Boolean influence. 

\paragraph{Effect of initialization.} The gradients of the parameters for a sample $x$ are given by
\begin{align*}
    \nabla_{b_L}L(\Theta, x, f) &= (f_{NN}(x;\Theta)(x) - f_{-k}(x)), \\
    \nabla_{b_{L-1}}L(\Theta, x, f) &= (f_{NN}(x;\Theta)(x) - f_{-k}(x))w_{L}, \\
    &~~\vdots \\
    \nabla_{b_{1}}L(\Theta, x, f) &= (f_{NN}(x;\Theta)(x) - f_{-k}(x))W_2W_3\cdots w_{L}.
\end{align*}
Now, consider the first update of the parameters. As we decrease the scale of initialization, the ratio of  $\frac{\nabla_{b_L}}{\nabla_{b_{L-1}}}$, $\cdots$, $\frac{\nabla_{b_2}}{\nabla_{b_{1}}}$ increases which implies that $b_1$ would have the smallest update and $b_L$ will have the largest update. Since the dynamics of $w_{1, k}$ and $b_1$ are the same, the frozen dimension would contribute the least to the bias after the first iteration. Our experiments on decreasing the scale of initialization suggest that this argument is not limited to the first iteration. In other words, using small enough initialization the bias will be mostly captured by the bias terms in other layers, which results in generalization error being close to the Boolean influence.

% \newpage
% \bibliography{references}
% \bibliographystyle{alpha}

\end{document}